\documentclass{article} 
\usepackage{style_files/iclr2021_conference,times}


\usepackage{amsmath,amsfonts,bm,amsthm}


\newtheorem{theorem}{Theorem}[section]
\newtheorem{lemma}[theorem]{Lemma}
\newtheorem*{remark}{Remark}








\def\eqref#1{equation~\ref{#1}}









\def\1{\bm{1}}



\def\rvepsilon{{\mathbf{\epsilon}}}

\def\rvalpha{{\boldsymbol{\alpha}}}

\def\rvepsilon{{\boldsymbol{\epsilon}}}

\def\rvI{{\mathbf{I}}}

\def\rvs{{\mathbf{s}}}

\def\rvx{{\mathbf{x}}}





\def\vtheta{{\bm{\theta}}}

\def\veta{{\bm{\eta}}}
\def\vepsilon{{\bm{\epsilon}}}
\def\vf{{\bm{f}}}

\def\vh{{\bm{h}}}

\def\vomega{{\bm{\omega}}}

\def\vs{{\bm{s}}}

\def\vT{{\bm{T}}}

\def\vv{{\bm{v}}}



\def\mR{{\bm{R}}}

\def\mT{{\bm{T}}}
\def\mU{{\bm{U}}}
\def\mV{{\bm{V}}}
\def\mW{{\bm{W}}}

\def\mPhi{{\bm{\Phi}}}

\def\mSigma{{\bm{\Sigma}}}

\DeclareMathAlphabet{\mathsfit}{\encodingdefault}{\sfdefault}{m}{sl}
\SetMathAlphabet{\mathsfit}{bold}{\encodingdefault}{\sfdefault}{bx}{n}











\newcommand{\E}{\mathbb{E}}

\newcommand{\R}{\mathbb{R}}

\newcommand{\Cov}{\mathrm{Cov}}


\DeclareMathOperator*{\argmax}{arg\,max}
\DeclareMathOperator*{\argmin}{arg\,min}

\usepackage{hyperref}
\usepackage{algorithm}
\usepackage{bbm}
\usepackage{graphicx}
\usepackage[noend]{algpseudocode}
\usepackage{url}
\usepackage{comment}
\usepackage{xcolor}
\usepackage{soul}
\usepackage{physics}

\title{
disentangling images with lie group transformations and sparse coding}



\author{Ho Yin Chau$^{1,2}$\;\;\;\; Frank Qiu$^{2,3}$\;\;\;\; Yubei Chen$^{1,2,4}$\;\;\;\; Bruno Olshausen$^{1,2}$ \\
$^1$ Redwood Center for Theoretical Neuroscience\\
$^2$ Berkeley AI Research\\
$^3$ Department of Statistics\\
University of California, Berkeley \\
$^4$ Facebook AI Research\\
\texttt{\{hchau630,frankqiu,yubeic,baolshausen\}@berkeley.edu}\\
}

%

\iclrfinalcopy 
\begin{document}

\maketitle

\begin{abstract}
Discrete spatial patterns and their continuous transformations are two important regularities contained in natural signals. Lie groups and representation theory are mathematical tools that have been used in previous works to model continuous image transformations. On the other hand, sparse coding is an important tool for learning dictionaries of patterns in natural signals. In this paper, we combine these ideas in a Bayesian generative model that learns to disentangle spatial patterns and their continuous transformations in a completely unsupervised manner. Images are modeled as a sparse superposition of shape components followed by a transformation that is parameterized by $n$ continuous variables. The shape components and transformations are not predefined, but are instead adapted to learn the symmetries in the data, with the constraint that the transformations form a representation of an $n$-dimensional torus. Training the model on a dataset consisting of controlled geometric transformations of specific MNIST digits shows that it can recover these transformations along with the digits.  Training on the full MNIST dataset shows that it can learn both the basic digit shapes and the natural transformations such as shearing and stretching that are contained in this data.
\end{abstract}

\section{Introduction}
A major challenge 
for both models of perception and
unsupervised learning is to form disentangled representations of image data, in which variations along the latent dimensions explicitly reflect factors of variation in the visual world. 
An important dichotomy among these factors of variation is the distinction between discrete patterns vs. continuous transformations \citep{mumford2010pattern}, the former referring to factors such as local shape features or objects and the latter typically referring to geometric transformations such as translation, scaling and rotation. 
Importantly, these factors are not overtly measurable but rather \textit{entangled} in the pixel values of an image.
Learning to disentangle the shapes and transformations inherent in image data is an important task that many previous works have attempted to address using architectures such as bilinear models, manifold models and modified VAEs  \citep{Tenenbaum_2000, Grimes_2005, Olshausen_2007, DiCarlo_2007, Cadieu_2012, Bengio_2013, Cheung_2014, Dupont_2018}.

One class of previous approaches has used Lie groups to model image transformations to varying degrees of generality \citep{Rao_1999, Miao_2007, Culpepper_2009, Sohl-Dickstein_2010, Cohen_2014, Cohen_2015, Gklezakos_2017}. A Lie group can be thought of as a parametric family of continuous transformations, and is a natural tool for modelling image transformations. However, all of these previous approaches focus solely on learning transformations but not the discrete patterns in the dataset. 

On the other hand, sparse coding is a widely known unsupervised algorithm for learning a dictionary of discrete spatial patterns from which images are composed \citep{Olshausen_1997}. Neurons in its hidden layer have been shown to recapitulate receptive field properties of V1 neurons after training on natural images, suggesting that the early human visual system may be employing the computational strategies of sparse coding. However, sparse coding does not model image transformations explicitly, and as a result, information about both shape and transformations are entangled in its representations.

Here we propose a novel unsupervised algorithm, Lie Group Sparse Coding (LSC), that combines the advantages of sparse coding and Lie group learning. In particular, our work is much inspired by the work of \citet{Cohen_2014}, which introduces the mathematical framework of representation theory to disentanglement learning.  We build on this framework by including a latent representation of shape via sparse coding.  The proposed algorithm infers both the sparse representation of shape and its transformation from an image, and it learns the shape dictionary and transformation operators from the data through an iterative process akin to expectation-maximization. We demonstrate the capacity of our model to disentangle representations of shape and transformation from controlled datasets where the ground truth is known, and from the full MNIST dataset where both the underlying shape categories and factors of variation due to style are unknown and must be learned from the data. 


\section{Preliminaries}

Sparse coding seeks to learn a dictionary of templates $\{\Phi_i\}$, such that each image $\rvI$ can be described by a sparse linear combination of these templates $ \rvI=\sum_i\Phi_i\,\alpha_i$. However, as mentioned in the introduction, sparse coding does not model image transformations explicitly. Transforming an image changes its sparse code $\rvalpha = [\alpha_1, \cdots, \alpha_K]^T$ in a non-equivariant manner, meaning form and transformations are entangled in the sparse code representation. To address this problem, we add an operator $\mT(\rvs)$ that explicitly models the transformations, so that images are now represented as transformations of patterns generated from the sparse coding model, $\rvI=\mT(\rvs)\mPhi \rvalpha$, where $\mPhi = [\Phi_1, \cdots, \Phi_K]$. 

Inspired by the work of \citet{Cohen_2014}, we choose to model the transformations $\mT(\rvs)$ as actions of compact, connected, commutative Lie groups on images. Such groups are equivalent to $n$-dimensional tori \citep{Dwyer_1998}. By the Peter-Weyl theorem, these transformations can be decomposed as $\mT(\rvs) = \mW \mR(\rvs)\mW^T$, where 
\begin{equation}
\label{eq:R}
\mR(\rvs) = \begin{bmatrix}
\cos(\vomega_1 ^T \rvs) & -\sin(\vomega_1 ^T \rvs) & & &\\
\sin(\vomega_1 ^T \rvs) & \cos(\vomega_1 ^T \rvs) & & &\\
                             &                            & \ddots & & \\
                             &                            &            & \cos(\vomega_L ^T \rvs) & -\sin(\vomega_L ^T \rvs) \\
                             &                            &            & \sin(\vomega_L ^T \rvs) & \cos(\vomega_L ^T \rvs) \\
\end{bmatrix},
\end{equation}

$\mW$ is an orthogonal matrix, and $\vomega_l \in \mathbb{Z}^n$. In practice, as we shall show in section \ref{section:5}, this parameterization supports the learning of various common transformations such as translation, rotation, shearing, stretching etc. This parameterization is nice in the sense that the dependence on the parameter $\rvs$ takes a simple form, namely a block diagonal matrix consisting of 2x2 rotation blocks, which allows for efficient inference and learning.

At this point, the reader may skip ahead to the next section, as the remainder of this section will give a more in-depth explanation of the theory behind the parameterization $\mT(\rvs) = \mW \mR(\rvs)\mW^T$. A large number of transformations - including rotations, rigid motion, and translations - can be understood as Lie groups, or more informally a group of continuous symmetries of a space. Some examples include the groups of $n$-dimensional rotations $SO(n)$ and rigid motions $SE(n)$. When a Lie group deforms data, such as rotating an image, this constitutes the action of that Lie group $G$ on the vector space of data. If this action is linear, this amounts to a representation of $G$ - an instantiation of $G$ as a set of linear operators. More formally, a representation of $G$ on a vector space $V$ is a group homomorphism $\rho: G \to GL(V)$ which maps each element of $G$ to an invertible linear transformation on $V$.\\
\\
In this paper we consider the representation of compact, connected, commutative (CCC) Lie groups. We choose CCC Lie groups because they have very simple representations. The Peter-Weyl theorem states that any unitary representation of a compact Lie group can be written as a direct sum of its irreducible representations; informally, a representation decomposes into its atomic parts - the irreducibles - which cannot be further decomposed. Since all irreducible representations of an $n$-dimensional torus (hence of CCC Lie groups, as they are equivalent) are of the form $\rho(e^{i\rvs}) = e^{i\vomega \cdot \rvs}$ for any $\vomega \in \mathbb{Z}^n$ \citep{Kamnitzer_2011}, where $e^{i\rvs} = [e^{i s_1}, e^{i s_2}, \cdots, e^{i s_n}]^T$ is a point on $\mathbb{T}^n$, any unitary representation a CCC Lie group is diagonal up to a unitary change of basis:
\begin{equation}
\label{eq:representation}
\rho(e^{i\vs}) = \mV\begin{bmatrix}
e^{i\vomega_1^T \vs} & & & \\
& e^{i\vomega_2^T \vs} & & \\
& & \ddots & \\
& & & e^{i\vomega_{D/2}^T \vs} \end{bmatrix}\mV^H \equiv \mV e^{\mSigma(\rvs)}\mV^H
\end{equation}
Since the data lives in the real vector space $\mathbb{R}^D$, we restrict $\rho$ to be real by choosing the $i\vomega_l$ in the diagonal matrix to come in purely imaginary conjugate pairs. After some simplification, this leads to the form $\mW\mR(\vs)\mW^T$  (see Appendix \ref{section:Appendix-A} for details). In fact, \textit{any} orthogonal representation of $\mathbb{T}^n$ can be written as $\mW\mR(\vs)\mW^T$ (see Theorem \ref{theorem:1} in Appendix \ref{section:Appendix-A} for details). In practice, we reduce the dimensionality of the model and improve efficiency by reducing the number of columns of $\mW$ to $2L < D$.\\

A minor downside of restricting our attention to CCC Lie groups is that it imposes several theoretical constraints on the class of learnable transformations: Compactness enforces the transformations to be periodic; connectedness precludes discrete transformations like reflections; commutativity excludes non-commutative transformations such as 3D rotations. Fortunately while many transformations violate these constraints in theory, we demonstrate in section \ref{section:5} that they can still be approximately learned in practice.

\section{Algorithm}
\subsection{Probabilistic Model}
Let $\rvI \in \R^D$ be the input image, where $D$ is even, and let $L \leq D/2$. We model $\rvI$ as
\begin{equation}
\rvI = \mW\mR(\rvs)\mW^T\mPhi \rvalpha + \rvepsilon
\label{eq:generative-model}
\end{equation}
where $\mW \in \R^{D \times 2L}$ is a matrix with orthonormal columns (i.e. $\mW^T\mW = \mathbbm{1}$), $\mPhi \in \R^{D \times K}$ is the dictionary with each column having unit L2 norm, and $\mR(\rvs)$ is the matrix defined in Eq. \ref{eq:R}. The random variables in the model are $\rvs$, $\rvalpha$, and $\rvepsilon$, which are all independent of each other. The transformation parameter $\rvs \in \R^n$ is a random vector whose components $s_i$ are i.i.d. with $s_i \sim \mathrm{Unif}(0,2\pi)$. The sparse code $\rvalpha \in \R^K$ is also a random vector whose components $\alpha_k$ are i.i.d. with $\alpha_k \sim \mathrm{Exp}(\lambda)$, the exponential distribution. The random noise $\rvepsilon$ is i.i.d. Gaussian with variance $\sigma^2$ and zero mean. Given an image $\rvI$, our goal is to infer the transformation parameters $\rvs$ and sparse code $\rvalpha$ according to their posterior distribution.  Given a large ensemble of images, our goal is to learn the parameters $\vtheta = \{\mW, \mPhi\}$ by maximizing their log-likelihood.  The procedures for inference and learning are presented in section \ref{section:3.2} below.

The weights $\vomega_l \in \mathbb{Z}^n$, which appear in the block diagonal rotational matrix $\mR(\rvs)$, are chosen such that if $\vomega_l$ is chosen then $-\vomega_l$ is omitted. This is because, as shown in the derivation of $\mT(\rvs) = \mW\mR(\rvs)\mW$ in Appendix \ref{section:Appendix-A}, each 2x2 block in $\mR(\rvs)$ is obtained by combining $\vomega_l$ terms with opposite signs in Eq. \ref{eq:representation}. A multiplicity $m \geq 1$ is then assigned to the weights, meaning each $\vomega_l$ is repeated $m$ times. Finally, we select the first $L$ $\vomega_l$ sorted by ascending frequency $||\vomega_l||_2$.

This model combines aspects of sparse coding \citep{Olshausen_1997} and the work by \citet{Cohen_2014} on learning irreducible representations of commutative Lie groups. If the term $\mW\mR(\rvs)\mW^T$ is replaced by the identity matrix, the model is identical to the sparse coding model of \citet{Olshausen_1997}. If $\mPhi\rvalpha$ is replaced by a transformed image $\rvI^\prime$, and $n = 1$ (scalar $s$), the model is identical to the one-parameter transformation model by \citet{Cohen_2014}.

\subsection{Inference and Learning}
\label{section:3.2}
\begin{algorithm}
\caption{Lie Group Sparse Coding Algorithm}
\label{alg}
\begin{algorithmic}[1]
    \State{$\vtheta = \{\mW, \mPhi\}  \gets \{\mW_0, \mPhi_0 \} $}
    \Comment{Initialize model parameters}
\While{$\mW, \mPhi$ not converged}
    \State{Get normalized image batch $\rvI$}
    \State{$\rvalpha \gets \rvalpha_0$}
    \Comment{Initialize sparse coefficients $\hat{\rvalpha}$}
    \For{$i \in \{1,\cdots,T\}$} 
    \Comment{Compute $\hat{\rvalpha} = \argmax_{\rvalpha} P_\vtheta (\rvalpha|\rvI)$}
        \State{Compute $P_\vtheta(\rvs|\rvI,\rvalpha)$}
        \State{$\Delta \rvalpha \gets \E_{\rvs \sim P_\vtheta(\rvs|\rvI,\rvalpha)}[\nabla_{\rvalpha} \ln P_\vtheta(\rvI|\rvs,\rvalpha)] + \nabla_{\rvalpha} \ln P_\vtheta(\rvalpha)$}
        \Comment{Compute gradient for $\rvalpha$}
        \State{$\rvalpha \gets \text{FISTA update}(\rvalpha, \Delta \rvalpha)$}
        \Comment{Update $\rvalpha$ using FISTA}
    \EndFor
    \State{$\hat{\rvalpha} \gets \rvalpha$}
    \State{Compute $P_\vtheta(\rvs|\rvI,\hat{\rvalpha})$}
    \State{$\Delta \mPhi \gets \E_{\rvs \sim P_\vtheta(\rvs|\rvI,\hat{\rvalpha})}[\nabla_\mPhi \ln P_\vtheta(\rvI|\rvs,\hat{\rvalpha})]$}
    \Comment{Compute approximate gradient for $\mPhi$}
    \State{$\mPhi \gets \text{Normalize}(\mPhi + \Delta \mPhi)$}
    \Comment{Update $\mPhi$ and normalize columns}
    \State{$\Delta \mW \gets \E_{\rvs \sim P_\vtheta(\rvs|\rvI,\hat{\rvalpha})}[\nabla_\mW \ln P_\vtheta(\rvI|\rvs,\hat{\rvalpha})]$}
    \Comment{Compute approximate gradient for $\mW$}
    \State{$\mW \gets  \text{RiemannianAdam}(\mW,\Delta \mW)$} \Comment{Update $\mW$ with RiemannianAdam optimizer}
\EndWhile
\end{algorithmic}
\end{algorithm}

The inference and learning procedure is outlined in Algorithm \ref{alg}. The general idea is as follows: to learn the model parameters $\vtheta = \{\mW, \mPhi\}$, we perform gradient ascent on their log-likelihood using the approximate gradient 
\begin{equation}
\nabla_\vtheta \ln P_\vtheta(\rvI) \approx \E_{\rvs \sim P_\vtheta(\rvs|\rvI,\hat{\rvalpha})}[\nabla_\vtheta \ln P_\vtheta(\rvI|\rvs,\hat{\rvalpha})]
\label{eq:gradient_theta}
\end{equation}
where $\hat{\rvalpha} = \argmax_\rvalpha P_\vtheta(\rvalpha|\rvI)$ is the MAP estimate of the hidden variable $\rvalpha$ (see Appendix \ref{section:Appendix-C} for derivation details). Notice moreover that 
\begin{equation}
\nabla_\vtheta \ln P_\vtheta(\rvI | \hat{\rvalpha}) = \E_{\rvs \sim P_\vtheta(\rvs|\rvI,\hat{\rvalpha})}[\nabla_\vtheta \ln P_\vtheta(\rvI|\rvs,\hat{\rvalpha})]
\label{eq:gradient_theta_exact}
\end{equation}
and that the right hand side Eq. \ref{eq:gradient_theta_exact} is just our approximate gradient in Eq. \ref{eq:gradient_theta}.
Hence, another interpretation is that we replaced the original objective function $\ln P_\vtheta(\rvI)$ by the approximate objective function $\ln P_\vtheta(\rvI|\hat{\rvalpha})$, which is much more computationally tractable than the original (see Appendix \ref{section:Appendix-B} for an explicit formula for $\ln P_\vtheta(\rvI|\rvalpha)$, and Appendix \ref{section:Appendix-C} for derivation details). The approximation step assumes that the posterior distribution $P(\rvalpha | \rvI)$ is sharply peaked around $\hat{\rvalpha}$, meaning $P(\rvalpha | \rvI) \approx \delta(\rvalpha - \hat{\rvalpha})$, where $\delta(\rvx)$ is the Dirac delta function. This is the same approximation used in the sparse coding algorithm by \citet{Olshausen_1997}. Also note the similarity between this approach and the EM algorithm, as each gradient step partially maximizes the expectation of the log likelihood with respect to the posterior distribution of the hidden variable given the current parameters. On the other hand, the MAP estimate $\hat{\rvalpha}$ is computed by gradient ascent on the log-posterior, whose gradient is computed via
\begin{equation}
\nabla_{\rvalpha} \ln P_\vtheta(\rvalpha|\rvI) = \E_{\rvs \sim P_\vtheta(\rvs|\rvI,\rvalpha)}[\nabla_{\rvalpha} \ln P_\vtheta(\rvI|\rvs,\rvalpha)] + \nabla_{\rvalpha} \ln P_\vtheta(\rvalpha)
\label{eq:gradient_alpha}
\end{equation}
(see Appendix \ref{section:Appendix-C} for derivation details). Note that both Eq. \ref{eq:gradient_theta} and \ref{eq:gradient_alpha} require inferring the posterior distribution of the transformation variable $P_\vtheta(\rvs|\rvI,\rvalpha)$. We show how this posterior distribution can be computed in Appendix $\ref{section:Appendix-B}$.



One may wonder whether the posterior distribution $P(\rvs|\rvI,\rvalpha)$ can be approximated by $\delta(\rvs - \hat{\rvs})$, just as we did for Eq. \ref{eq:gradient_theta}. This would allow us to avoid computing the full distribution and use the point estimate $\hat{\rvs} = \argmax_{\rvs} P_\vtheta(\rvs|\rvI,\rvalpha)$ (optimized using gradient descent) in order to update the model parameters. We find empirically that using this approach leads to worse convergence of the model parameters, and we believe there are two reasons for this: first, during the initial stages of training, the posterior distribution of $\rvs$ has many local extrema, and hence using a single point estimate $\hat{\rvs}$ is a bad approximation; second, the presence of many local extrema during initial stages means it is easy to get stuck in a local minimum using gradient descent. Furthermore, when the number of transformation parameters is small (which is the case for the  experiments in this paper), it is faster to simply compute the full distribution of $\rvs$ than performing gradient descent to find $\hat{\rvs}$, as the full distribution can be computed in a highly parallelized manner but gradient descent cannot.

Computation of the gradients involves the expectation term $\bar{\mR} = \E_{\rvs \sim P_\vtheta(\rvs|\rvI,\rvalpha)}[\mR(\rvs)]$ (see Appendix \ref{section:Appendix-C} for details), which is obtained by numerically integrating $\int_\rvs P_\vtheta(\rvs|\rvI,\rvalpha) \mR(\rvs)$ with $N$ samples along each dimension. An efficient way of computing this quantity using Fast Fourier Transform is detailed in \citet{Cohen_2015}, although numerical integration is adequate for our purposes.

We used FISTA to greatly speed up the inference of $\rvalpha$ by around a factor of 10 \citep{Beck_2009}. While the objective $\ln P_\vtheta(\rvalpha|\rvI)$ is not guaranteed to be convex in $\rvalpha$, which violates one of the theoretical assumptions of FISTA, we find that in practice $\rvalpha$ always converges well. Details of the usage of FISTA in LSC is provided in Appendix \ref{section:Appendix-D}.

As mentioned in section 3.1, there are two hard constraints on the model parameters $\mW, \mPhi$: the columns of $\mW$ must be orthonormal, and the columns of $\mPhi$ must have unit norm. The constraint on $\mW$ comes from the fact that we only want to learn orthogonal representations, while the unit norm constraint on $\mPhi$ prevents $\mPhi$ from growing without bound (see \citet{Olshausen_1997}). We optimized $\mPhi$ by performing projected gradient descent, normalizing each column of $\mPhi$ after each gradient step. For $\mW$, we used the Riemannian ADAM optimizer to optimize $\mW$ on the Stiefel manifold (the manifold of matrices with orthogonal columns). We used the Riemmanian ADAM implementation from the python package geoopt \citep{geoopt2020kochurov}. Using Riemannian ADAM instead of a simple projected gradient descent, as is done in \citet{Cohen_2014}, was empirically found to speed up convergence by around 3-4 times.

\section{Related Works}
\citet{Gklezakos_2017} proposed the Transformational Sparse Coding algorithm (TSC) that, like LSC, combines ideas from Lie group theory with sparse coding. A significant difference, however, is that in TSC the group representation is fixed manually rather than learned; more concretely, the transformations in the generative model are fixed to be the family of 2D affine transformations, whereas LSC allows for the learning of any transformations as long as they form a representation of a CCC Lie group. This limits the ability of TSC to adapt to the transformations in image data. Another difference is that in TSC images are modelled as being a combination of transformed variants of root templates, whereas in LSC images are modelled as transformed versions of a combination of templates. In other words, TSC focuses on the transformations of local features, whereas LSC focuses on the global image transformations.

The inference and learning of transformations in our algorithm is based on the TSA algorithm (Toroidal Subgroup Analysis) by \citet{Cohen_2014} which learns the representation of a one-parameter subgroup of the maximal torus. There are two main differences: first, TSA learns image transformations given pairs of images with the same shape, whereas LSC learns both form and transformation together without being given any information about the shape of the images; second, LSC generalizes the one-parameter subgroup representation learned in TSA to the representation of an \textit{arbitrary} $N$-dimensional torus, allowing for the learning of a wider variety of transformations. \citet{Cohen_2015} later extended their work to learning 3D object rotations using the group representation of $SO(3)$, although it again only learns the transformations but not the discrete spatial patterns.

A large body of work has appeared in recent years that modifies existing deep neural network architectures such as GAN and VAE to learn disentangled representations in an unsupervised manner \citep{Cheung_2014, Chen_2016, Higgins_2017, Dupont_2018, Kim_2018, Chen_2018}. While these models are more general and potentially more powerful than LSC due to their many convolutional layers, they are also substantially more complex (in terms of number of layers) than LSC which requires only one layer for transformation and one layer for sparse coding. The compactness of LSC is due to the fact that we explicitly designed a layer to model Lie group transformations. We speculate that a model with multiple such transformation layers could capture a broader range of image transformations than a generic multilayer convnet.

There are also important works in supervised deep learning that include a specialized module to handle image transformation. Spatial Transformer Networks by \citet{Jaderberg_2015} uses a differentiable transformer module that models affine transformations. Capsules use a group of neurons to collectively encode the probability of an object's presence and the pose/transformation of such an object \citep{Hinton_2011, Sabour_2017, Hinton_2018}. Representation theory is used to develop new neural network layers that are equivariant to input transformations \citep{Cohen_2016, Cohen_2017, Cohen_2018, Cohen_2019}.

LSC grew out of works on separating form and transformation using bilinear models. Bilinear models assume a generative process in which two vectors, one encoding form and the other encoding transformation, combine bilinearly, meaning the output is linear with respect to both vectors \citep{Rao_1998, Tenenbaum_2000, Grimes_2005, Olshausen_2007}. Some bilinear models do not explicitly learn a transformation operator, but instead learn transformed versions of shape templates, and as a result the transformations learned will not easily generalize to new shapes \citep{Tenenbaum_2000, Grimes_2005}. Other bilinear models do learn an explicit transformation operator, but because of the difficulty of inferring and learning large transformations, only local transformations are learned \citep{Rao_1998, Olshausen_2007}.

Attempts to learn larger transformations led to works on learning Lie group transformations. Typically, these transformations are learned from pairs of transformed images or a sequence/set of transforming images \citep{Rao_1999, Miao_2007, Culpepper_2009, Sohl-Dickstein_2010}. Early works learn the generator, or the Lie algebra, directly, but because of the computational intractability of gradient descent with respect to the matrix exponential, first-order Taylor expansion of the matrix exponential is used during learning, which limits the ability to learn from images with large transformations \citep{Rao_1999, Miao_2007}. An important technique discovered by \citet{Sohl-Dickstein_2010} is to diagonalize the generator, which allows for tractable Lie group learning from large transformations. This idea was formalized and generalized by \citet{Cohen_2014} using representation theory, allowing for the learning of complex, large transformations such as 3D rotation in a later paper \citep{Cohen_2015}. This was an important generalization, since describing the set of transformations on images as a representation of a Lie group rather than a Lie group itself, as is done in previous work, allows for the use of representation theory to simplify computations. For instance, the representation theory of the $N$-dimensional torus is used in deriving the simple parameterization of the transformation operator in this paper.

\section{Experiments}
\label{section:5}
To demonstrate that our algorithm can successfully disentangle different form and transformation factors, we first train the model on two synthetic datasets in which the generative models are fully known. We set $K = 10$ and $n = 2$, meaning there are $10$ dictionary templates and $2$ latent dimensions for the transformation parameter $\rvs$. In the first dataset, we select one image from each of the 10 digit classes in 28x28 MNIST, then apply $6000$ random 2D translations to each of the $10$ selected images, totalling $60000$ images. Both vertical and horizontal translations are drawn uniformly between $-7$ and $7$ pixels.  In the second dataset, instead of 2D translations, we apply $6000$ random rotations and scaling to the $10$ images. Rotation is drawn uniformly between $-75^\circ$ and $75^\circ$, while scaling is drawn uniformly between $0.5$ and $1.0$. Figure \ref{fig:1} shows 80 images from each dataset.

\begin{figure}[h]
\begin{center}

\includegraphics[width=\textwidth]{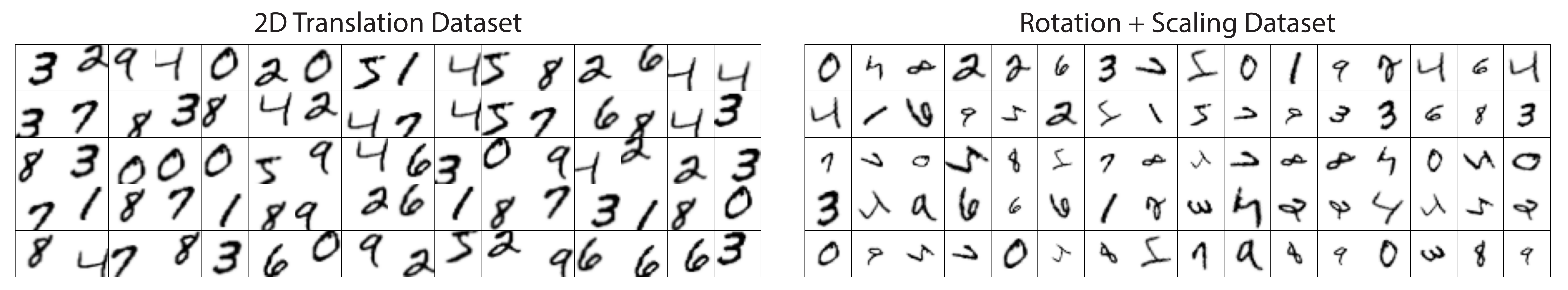}
\end{center}
\caption{80 example images from each of the two synthetic datasets}
\label{fig:1}
\end{figure}

For each dataset, LSC is able to learn the $10$ digits as well as the two operators that generated it (training details in Appendix \ref{section:Appendix-E}). Figure \ref{fig:2} shows the learned $W$ matrices, while the learned dictionary $\mPhi$ can be found in the middle of figure \ref{fig:3}. Notice that each of the learned dictionary template $\mPhi_i$ corresponds to one of the digits. Latent traversals of the two operators are shown at the bottom of figure \ref{fig:3}, in which we select 5 random images from the test set and apply the learned operator $\mT(\rvs)$ with varying $\rvs$ to those images. It is clear from the figure that the learned transformations are exactly the 2D translation operators and the rotation + scaling operators respectively. Strikingly, even though the rotation + scaling dataset contains only rotations between $-75^\circ$ and $75^\circ$, the model learns the full $360^\circ$ rotation. This ability to generalize and extrapolate correctly the transformation present in the dataset is a feature of the Lie group structure that is built into LSC. 

One might notice that there is a slight mixture of rotation and scaling in the latent traversal plots in figure \ref{fig:3}, which may seem to suggest that the algorithm failed to disentangle rotation and scaling completely. However, we note that there is no reason to require $\rvs_1$ and $\rvs_2$ to learn to parameterize pure rotation and pure scaling respectively, since learning to parameterize a linear combination of rotation and scaling also allows the network to perform arbitrary rotations and scalings perfectly. 

\begin{figure}[!htb]
\begin{center}
\includegraphics[width=\textwidth]{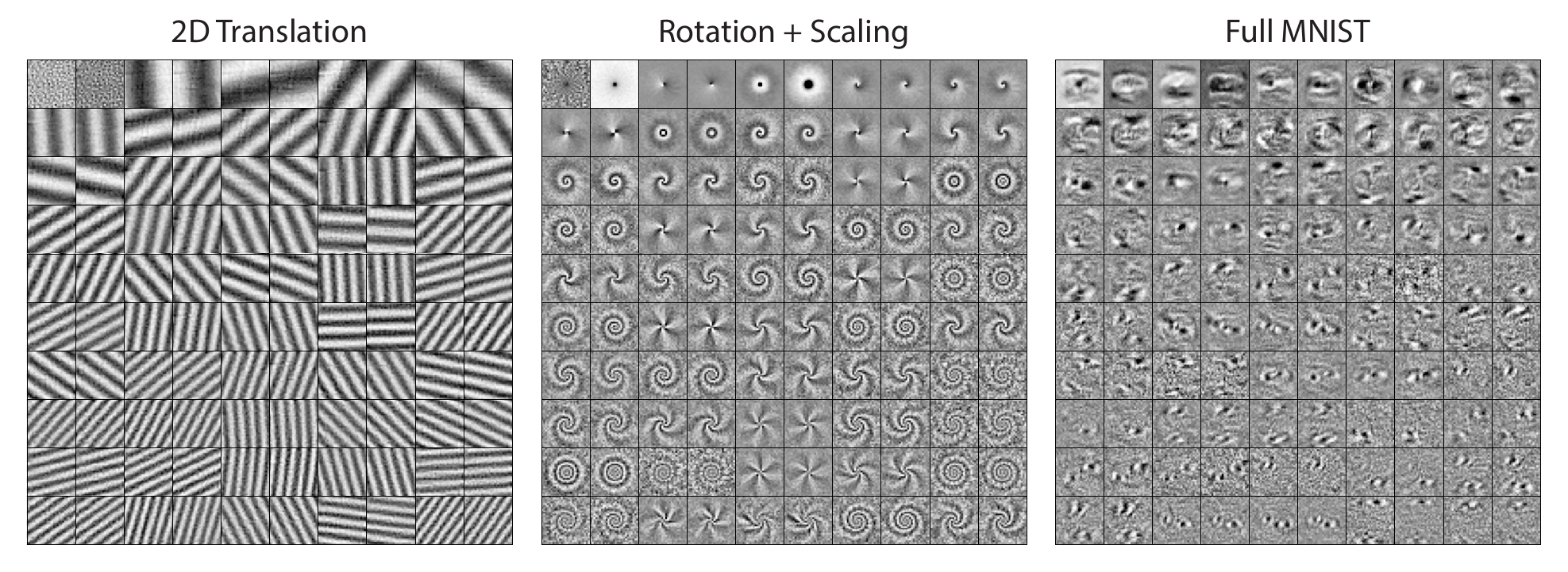}
\end{center}
\caption{The first $100$ columns of $\mW$ learned on the three different datasets. Each image shows a column of $\mW$, and are ordered by increasing values of $||\vomega||_2^2$}
\label{fig:2}
\end{figure}

The inference process is demonstrated at the top of figure \ref{fig:3}. An image $\rvI$ is given to the network, which then performs the inference procedure given in section 3.2 to yield the MAP estimate of the sparse coefficients $\hat{\rvalpha}$ and the posterior distribution of the transformation parameter $P(\rvs|\rvI,\hat{\rvalpha})$. A reconstruction of the input is then computed as $\hat{\rvI} = \mT(\hat{\rvs})\mPhi \hat{\rvalpha}$, where $\hat{\rvs} = \argmax_\rvs P(\rvs|\rvI,\hat{\rvalpha})$ is the MAP estimate of the $\rvs$. It can be seen from the figure that the inferred $\rvalpha$ is essentially 1-sparse, and that the posterior distribution of $\rvs$ is sharply peaked. 

\begin{figure}[!htb]
\begin{center}
\includegraphics[width=\textwidth]{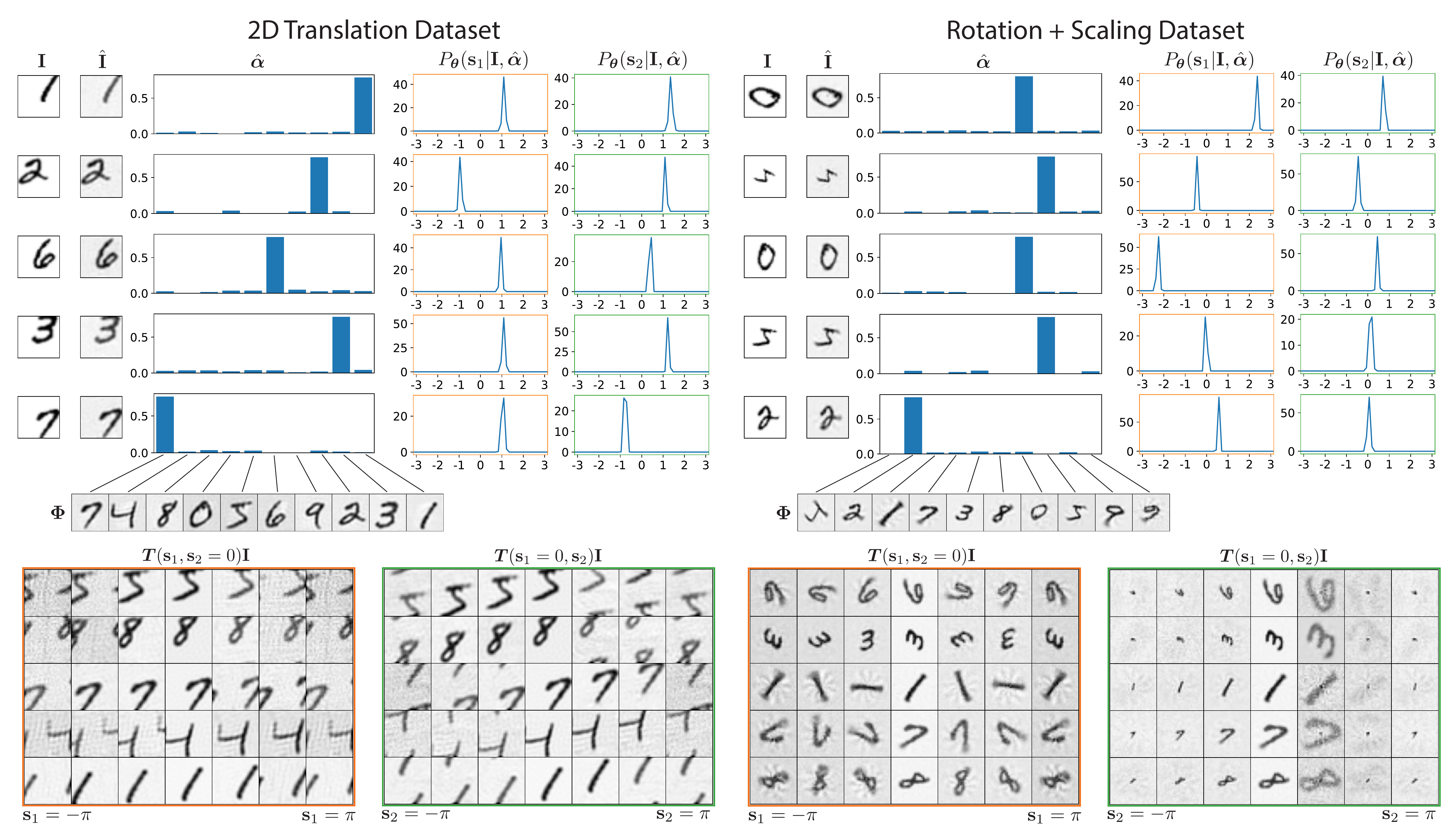}
\end{center}
\caption{Top: Inference and image reconstruction for five inputs $\rvI$ from each dataset. Bottom: Latent traversals of the transformation parameters $\rvs_1$ and $\rvs_2$, obtained by applying $\mT(\rvs)$ with varying values of $\rvs$ to five images from each test set. Orange figure shows latent traversal of $\rvs_1$ from $-\pi$ to $\pi$, while green figure shows latent traversal of $\rvs_2$ from $-\pi$ to $\pi$. The network has been trained on the respective datasets for $20$ epochs.}
\label{fig:3}
\end{figure}

We also trained our model on MNIST to demonstrate its capacity to disentangle shape and transformations on a more natural dataset Where the correct answer is less clear and difficult to ascertain from first principles (see Appendix \ref{section:Appendix-E} for training details). The left side of figure \ref{fig:4} are analogous to figure \ref{fig:3}. We see that LSC learns $9$ out of the $10$ digits with its dictionary $\mPhi$, while the latent traversal plots show that the model has learned horizontal stretching and shearing transformations. Figure \ref{fig:2} also shows the columns of the learned $\mW$ matrix. The reason that the dictionary did not learn the digit ``$1$" is because during inference it always uses the learned horizontal stretching transform to ``squeeze" a ``$0$" into a ``$1$", so that a separate ``$1$" template is not necessary. In our experiments we found that one could learn the ``$1$" template if a prior on $\rvs$ with a narrow peak near $0$ is used instead of a uniform prior, with the intuition being that the narrow prior prevents large horizontal stretching transformations from being used to squeeze a ``$0$" into a ``$1$". 

We compare LSC to sparse coding alone by training it on MNIST as well with the same number of dictionary elements. The learned dictionary $\mPhi$ as well as the inference process is shown on the right of figure $\ref{fig:4}$. For comparison, the same five inputs $\rvI$ were used in inference for both LSC and sparse coding. As can be seen, the reconstruction $\hat{\rvI}$ is much blurrier without a transformation model. The inferred sparse coefficients $\hat{\rvalpha}$ are also less sparse than LSC. Also note that the dictionary $\mPhi$ learned by sparse coding requires more than one template to capture the different poses of a digit, such as the slanted `1' in the 4th dictionary element and the upright `1' in the 7th element. 

\begin{figure}[!htb]
\begin{center}
\includegraphics[width=0.8\textwidth]{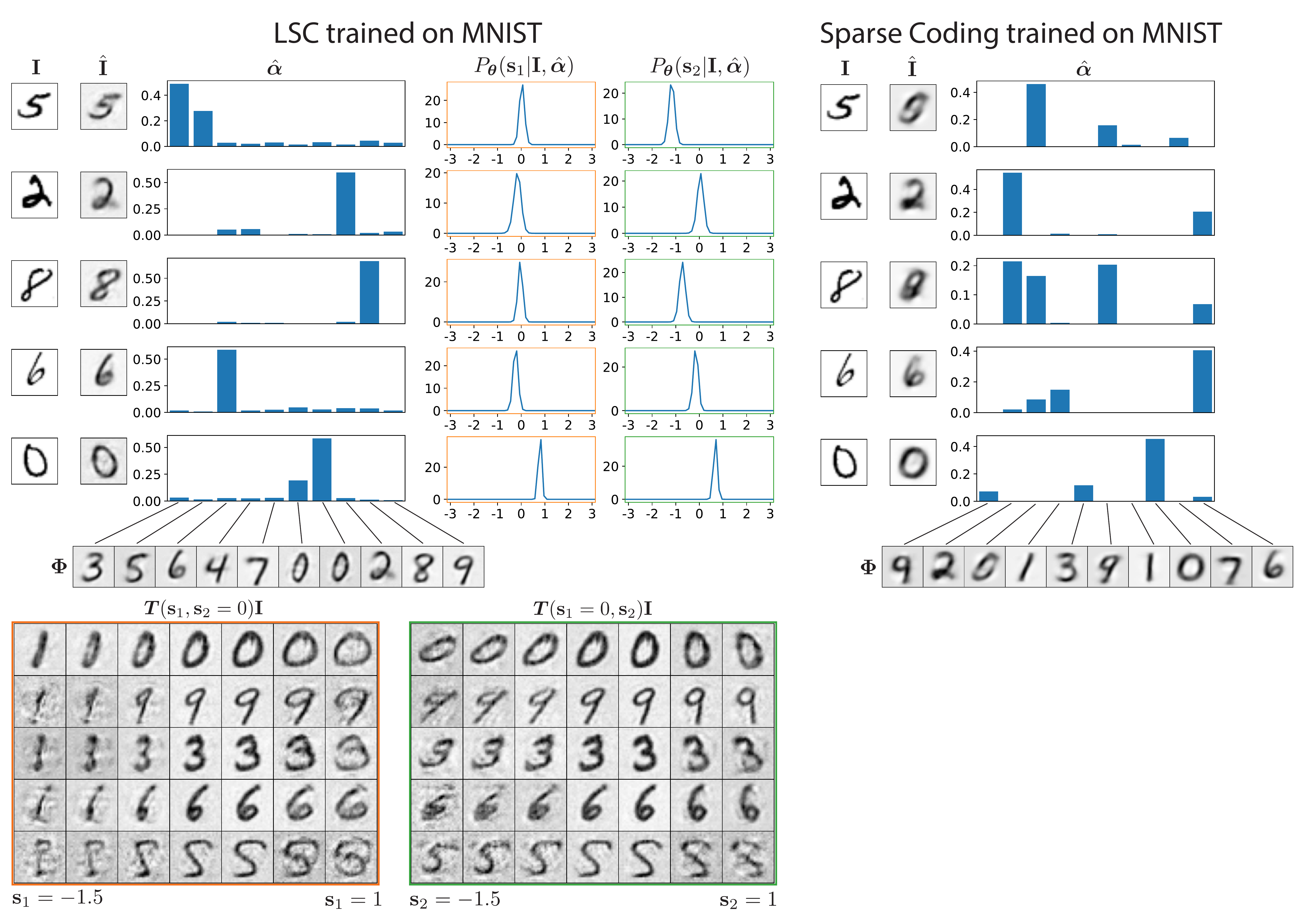}
\end{center}
\caption{Top left: Inference and image reconstruction for five inputs $\rvI$ from MNIST using LSC. Bottom left: Latent traversals of the transformation parameters $\rvs_1$ and $\rvs_2$ using LSC. Orange figure shows latent traversal of $\rvs_1$ from $-1.5$ to $1$, while green figure shows latent traversal of $\rvs_2$ from $-1.5$ to $1$. Top right: Inference and image reconstruction for the same five inputs $\rvI$ using sparse coding. Both LSC and sparse coding has been trained on MNIST for $20$ epochs.}
\label{fig:4}
\end{figure}

We demonstrate the improvement of LSC over sparse coding quantitatively in table \ref{table:1}. We train both algorithms on the same datasets while ensuring that both models are using the same dictionary size and sparsity cost. After training for 20 epochs we evaluate the two algorithms on a test set and calculate the SNR (signal-to-noise ratio) of the reconstructioned images. LSC outperforms sparse coding in all settings, and the improvement in SNR is over 10 times on the 2D translation datset. This is particularly remarkable considering that we set the dimension of the transformation parameter $\rvs$ in LSC to be $n=2$, meaning LSC only has two more degrees of freedom than sparse coding during the inference process. 

\begin{table}[!htb]
\begin{center}
\begin{tabular}{llll}
\multicolumn{1}{c}{\bf Dataset} & \multicolumn{1}{c}{\bf Hyperparamters} &\multicolumn{1}{c}{\bf LSC} &\multicolumn{1}{c}{\bf Sparse Coding}
\\ \hline \\
2D Translation Dataset & Dictionary size 10, sparsity cost $\lambda = 10.0$         & \textbf{28.5} & 2.2 \\
Rotation + Scaling Dataset & Dictionary size 10, sparsity cost $\lambda = 10.0$             & \textbf{20.5} & 2.6\\
MNIST dataset & Dictionary size 10, sparsity cost $\lambda = 10.0$             & \textbf{4.8} & 3.0 \\
MNIST dataset & Dictionary size 100, sparsity cost $\lambda = 1.0$             & \textbf{16.7} & 15.3 \\
\end{tabular}
\end{center}
\caption{Comparison of LSC and sparse coding using SNR (signal-to-noise ratio) of reconstructed images. Each row shows the average SNR of reconstructed images ($\mT(\hat{\rvs})\mPhi\hat{\rvalpha}$ for LSC and $\mPhi \hat{\rvalpha}$ for sparse coding) after training both LSC and sparse coding on the same dataset with the same hyperparameters listed for $20$ epochs (more training details in Appendix \ref{section:Appendix-E})}
\label{table:1}
\end{table}

\section{Discussion}

In this work, we study the problem of disentangling factors of variation in images, specifically discrete patterns vs. continuous transformations, which remains an open theoretical problem. To approach the problem, we combine Lie Group transformation learning and sparse coding within a Bayesian model. We show how spatial patterns and transformations can be learned as separate generative factors and inferred simultaneously. 
We would like to emphasize that the contribution is primarily in theory rather than a specific application. When the model is trained on synthetic MNIST datasets containing known geometric transformations, the digits are learned by the dictionary templates and the applied transformations are learned by the transformation operator, providing a proof of concept demonstration of the feasibility of combining Lie Group transformation learning and sparse coding.



We would like to emphasize two main points about this work. Firstly, building on the foundational work of \citet{Cohen_2014}, we show that incorporating the appropriate mathematical structure for describing transformations (Lie groups) enables a model to learn to disentangle shape and transformations 
in a network with computationally simple structure. The generative model (equation~\ref{eq:generative-model}) is bilinear in the sparse code $\rvalpha$ and block diagonal matrix $\mR$ whose elements in turn are sines and cosines of the transformation variable $\vs$ (equation~\ref{eq:R}).  Although in principle a generic multilayer neural network could learn to approximate the inferential computations in this model, we conjecture that it would lead to a more complicated structure (when evaluated in terms of number of weights and layers) and less robust performance in terms of its ability to generalize outside the training set. The representation theory of Lie groups leads us to a parameterization of the transformation operator that is both computationally efficient and effective at learning a variety of transformations. Secondly, our Bayesian framework provides an advantage for learning a joint form and transformation model. While the usual approach is to jointly optimize the form and transformation parameters using gradient descent, a Bayesian approach reveals that it is better to integrate out the transformations parameters when optimizing the sparse coefficients. Empirically, we found that this approach achieves better convergence than the joint optimization approach, allowing the algorithm to reliably learn the correct transformations and shape dictionary. 

There are two main limitations to our model. First, our current way of computing the gradient of $\rvalpha$ is not scalable to a large number of transformation parameters, as it involves the expectation $\int_\rvs P_\vtheta(\rvs|\rvI,\rvalpha) \mR(\rvs)$, in which the number of samples of $\rvs$ needed to compute the integral scales exponentially with $n$, the dimension of $\rvs$. A possible future direction of our work is to address this issue by finding a simple distribution approximating $P_\vtheta(\rvs|\rvI,\rvalpha)$ that can be used for importance sampling. Another possibility is to use MCMC methods to compute the expectation. 

The second limitation to our model is the various constraints on the transformations that can be learned, which includes orthogonality, compactness, connectedness, and commutativity. Though we showed in our experiments that some of these theoretical constraints are not quite problematic, there are still important transformations which cannot be learned as a result, such as global variations in contrast or a combination of rotation and translation. One possibility is to extend our current method and learn representations of a larger class of Lie groups, but it is unclear whether simple parameterizations exist for such groups. To address this issue, one possible future direction is to learn the representation of an arbitrary Lie group by learning the corresponding Lie algebra. Finally, the key idea of using a trainable Lie Group transformer module instead of a predefined transformation module like the spatial transformer \citep{Jaderberg_2015} may be highly useful for separating the transformation in deep neural networks. We would like to point out this as another interesting future direction.

\subsubsection*{Acknowledgments}
We thank Christian Shewmake for providing many helpful feedbacks during the preparation of this paper. Research conducted by Yubei Chen at Berkley AI Research was funded in part by NSF-IIS-1718991 and NSF-DGE-1106400. Bruno Olshausen’s contributions were funded in part by NSF-IIS-1718991, NSF-DGE-1106400, and DARPA's Virtual Intellgence Processing program (SUPER-HD). Ho Yin Chau’s contributions were funded in part by FAFSA..

\bibliography{references}
\bibliographystyle{style_files/iclr2021_conference}

\section*{Appendix}
\appendix
\section{Orthogonal representations of $\mathbb{T}^n$}
\label{section:Appendix-A}
\begin{theorem}
\label{theorem:1} Any $D$-dimensional real, orthogonal representation of $\mathbb{T}^n$ can be written in the form $\mW\mR(\rvs)\mW^T$ where:
\begin{enumerate}
    \item $\mR(\rvs)$ is a $D \times D$ block-diagonal matrix with $J+1$ blocks for some $J \leq D/2$
    \item The first $J$ blocks of $\mR(\rvs)$ are in $SO(2)$ and correspond to the non-trivial irreducibles.
    \item The last block of $\mR(\rvs)$ is the $(D - 2J) \times (D -2J)$ identity matrix and corresponds to the trivial representation.
    \item $\mW$ is a $D \times D$ orthogonal matrix.
\end{enumerate}
If $D$ is even, then $\mR(\rvs)$ takes the form of Eq. \ref{eq:R}, since in that case $D-2J$ is even, so the last identity matrix block can be written as a direct sum of $SO(2)$ rotation blocks with $\vomega_l = 0$ for each rotation block.
\end{theorem}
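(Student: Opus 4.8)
The plan is to pass to the complexification, invoke Peter--Weyl to diagonalize, and then repackage the complex weights into real $SO(2)$ blocks while being careful to keep the change of basis orthogonal. Write $\rho : \mathbb{T}^n \to O(D)$ for the given representation, so each $\rho(e^{i\rvs})$ is a real orthogonal matrix. Viewing these matrices as acting on $\mathbb{C}^D$, they are unitary, so $\rho$ is a unitary representation on $\mathbb{C}^D$. By the Peter--Weyl theorem and the classification of torus irreducibles recalled around Eq.~\ref{eq:representation}, there is a unitary $\mV$ with $\mV^H \rho(e^{i\rvs}) \mV = \mathrm{diag}(e^{i\vomega_1^T \rvs}, \dots, e^{i\vomega_D^T \rvs})$ for integer weight vectors $\vomega_j \in \mathbb{Z}^n$, and the columns of $\mV$ form an orthonormal eigenbasis.

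Next I would use that $\rho$ is real. Applying complex conjugation to the eigenvalue equation $\rho(e^{i\rvs})\vv = e^{i\vomega^T\rvs}\vv$ shows that $\overline{\vv}$ is an eigenvector with weight $-\vomega$; hence the multiset of weights is invariant under $\vomega \mapsto -\vomega$, and I may choose the eigenbasis to be closed under conjugation (and to be real on the zero-weight eigenspace, which is the complexification of the fixed subspace of $\rho$). I then group the nonzero weights into conjugate pairs $\{\vomega_l, -\vomega_l\}$ for $l = 1, \dots, J$, leaving $D - 2J$ zero weights.

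The crux is to realize each conjugate pair as a real rotation block through an \emph{orthonormal} real basis. Writing $\vv_l = \va_l + i\vb_l$ with $\va_l,\vb_l$ real for the eigenvector of weight $\vomega_l$, the reality of $\rho$ yields $\rho(e^{i\rvs})\va_l = \cos(\vomega_l^T\rvs)\,\va_l - \sin(\vomega_l^T\rvs)\,\vb_l$ and $\rho(e^{i\rvs})\vb_l = \sin(\vomega_l^T\rvs)\,\va_l + \cos(\vomega_l^T\rvs)\,\vb_l$, so on the real span of $\va_l$ and $-\vb_l$ the operator acts as $\begin{pmatrix} \cos(\vomega_l^T\rvs) & -\sin(\vomega_l^T\rvs) \\ \sin(\vomega_l^T\rvs) & \cos(\vomega_l^T\rvs) \end{pmatrix} \in SO(2)$. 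The step I expect to require the most care is verifying that these real vectors, collected across all pairs together with a real basis of the zero-weight space, form an orthonormal set, so that the assembled matrix $\mW$ is genuinely orthogonal and not merely invertible. This is where unitarity of $\mV$ is essential: since $\langle \vv_l, \vv_l\rangle = 1$ and $\langle \vv_l, \overline{\vv_l}\rangle = 0$ (the latter because $\vomega_l \neq -\vomega_l$ for a nonzero weight), one gets $\|\va_l\| = \|\vb_l\| = 1/\sqrt{2}$ and $\va_l \perp \vb_l$, so $\sqrt{2}\,\va_l$ and $-\sqrt{2}\,\vb_l$ are orthonormal; orthogonality across distinct pairs follows from orthogonality of the complex eigenvectors. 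Assembling these orthonormal columns into $\mW$ gives $\rho(e^{i\rvs}) = \mW \mR(\rvs) \mW^T$ with $\mR(\rvs)$ block-diagonal: its first $J$ blocks are the $SO(2)$ rotations attached to the nontrivial irreducibles, and its last block is the $(D-2J)\times(D-2J)$ identity coming from the trivial part, which is items 1--4.

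For the final remark, when $D$ is even $D - 2J$ is even, so the trailing identity block decomposes as a direct sum of $2\times 2$ identity matrices; each such block is the $SO(2)$ rotation with $\vomega_l = 0$, since $\mR$ evaluated at zero frequency is the identity. This brings $\mR(\rvs)$ into exactly the all-rotation-blocks form of Eq.~\ref{eq:R}, completing the argument.
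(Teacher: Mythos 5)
Your proof is correct, and while it shares the paper's overall architecture (complexify, diagonalize via Peter--Weyl, pair the weights $\vomega$ and $-\vomega$, then realify into $SO(2)$ blocks), it handles the crucial pairing step by a genuinely different and more elementary route. The paper establishes that nonzero weights come in $\pm$ pairs by a pointwise eigenvalue argument: it applies the conjugate-root lemma to $\rho(e^{i\rvs})$ at individual values of $\rvs$, and then needs an open-ball/density argument with products of principal logarithms plus a zero-polynomial lemma to rule out accidental coincidences (e.g.\ an eigenvalue being real at a particular $\rvs$) and to promote the pointwise pairing to a pairing of the weight vectors themselves. You instead conjugate the eigenvector equation $\rho(e^{i\rvs})\vv = e^{i\vomega^T\rvs}\vv$ once, for all $\rvs$ simultaneously, obtaining an antilinear bijection between the weight-$\vomega$ and weight-$(-\vomega)$ spaces; this pairs the weights globally with matching multiplicities and makes the paper's Lemmas \ref{lemma:1} and \ref{lemma:2} and the entire $f(\rvs)$, $g(\rvs)$ construction unnecessary. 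Your realification step --- taking $\va_l = \mathrm{Re}(\vv_l)$, $-\vb_l = -\mathrm{Im}(\vv_l)$, rescaling by $\sqrt{2}$, and checking orthonormality from $\langle \vv_l,\vv_l\rangle = 1$ and $\langle \vv_l,\overline{\vv_l}\rangle = 0$ --- is computationally the same as the paper's conjugation by the block-unitary $\mU$ (the columns of $\mV\mU$ are exactly $\sqrt{2}\,\mathrm{Re}(\vv)$ and $\sqrt{2}\,\mathrm{Im}(\vv)$ up to sign), just phrased basis-by-basis rather than in matrix form. The one place you are slightly terse is the claim that the eigenbasis can be chosen closed under conjugation and real on the zero-weight space; this is standard (the zero-weight space is conjugation-stable, hence the complexification of a real subspace), but you should state the orthogonality of distinct weight spaces explicitly since your orthonormality computation leans on it. Net effect: your argument buys a substantially shorter proof of the hardest step at no loss of generality, while the paper's version is more self-contained at the level of linear algebra over $\C$ evaluated at single group elements.
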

\begin{proof}

Any unitary representation $\rho$ of $\mathbb{T}^n$ in $\mathbb{C}^D$ takes the form $\rho(e^{i\rvs}) = \mV e^{\mSigma(\rvs)} \mV^H$ from Eq. \ref{eq:representation}, where $\mSigma(\rvs)$ is a diagonal matrix with diagonal $(-i\vomega_1^T\rvs,\cdots,-i\vomega_D^T\rvs)$.\\
\\
We will show that for every $\vomega_j \neq 0$ there is a $\vomega_k$ such that $\vomega_k = -\vomega_j$.  First, we assume that $\vomega_j \neq 0$ for all $j$. Our first step is to show that there exists some open ball $B \subset \R^n$ such that $e^{-i\vomega_j^T\rvs}$ have non-zero imaginary component for all $j$ and for all $\rvs \in B$. Consider the function:
\[f(\rvs) = \prod_j \mathrm{Log} (e^{-2i\vomega_j^T\rvs}) =  \prod_j[ -2i \vomega_j^T\rvs]
\]
where $\mathrm{Log}$ is the principal branch of the complex logarithm. The second equality follows if we restrict the domain of $f(\rvs)$ to a suitably small open subset $U \subset \mathbb{R}^n$ such that $-\pi < -2\vomega_j^T\rvs < \pi$ for all $j$. From the right hand side, we see $f(\rvs)$ is a non-zero function on $U$ since $\vomega_j$'s are all non-zero by assumption, which implies there must be some open ball $B \subset U$ such that $f(\rvs) \neq 0$ for all $\rvs \in B$ (if not, then the support of $f$, $U \setminus f^{-1}(\{0\})$, has an empty interior, which means the zero set $f^{-1}(\{0\})$ is dense in $U$. But by continuity of $f$, $f^{-1}(\{0\})$ is a closed set, so $f^{-1}(\{0\}) = \overline{f^{-1}(\{0\})} = U$, meaning $f$ is a zero function, contradiction). Since $\mathrm{Log} (e^{-2i\vomega_j^T\rvs}) = 0 \iff (e^{-i\vomega_j^T\rvs})^2 = 1 \iff e^{-i\vomega_j^T\rvs} \in \{1,-1\}$, $f(\rvs) \neq 0$ for all $\rvs \in B$ implies $e^{-i\vomega_j^T\rvs} \not \in \{1, -1\}$ for all $j$ and for all $\rvs \in B$, and so $e^{-i\vomega_j^T\rvs}$ have non-zero imaginary component for all $j$ and for all $\rvs \in B$. 

Now we show that for every $\vomega_j \neq 0$ there is a $\vomega_k$ such that $\vomega_k = -\vomega_j$. Consider the function:
\[g(\rvs) = \prod_{j,k : j < k} \mathrm{Log}\Big(\frac{e^{-i\vomega_j^T\rvs}}{e^{i\vomega_k^T\rvs}}\Big) = \prod_{j,k: j <k}[-i(\vomega_j + \vomega_k)^T \rvs]
\]
where we continue to assume that $\vomega_j \neq 0$ for all $j$. Again, $\mathrm{Log}$ is the principal branch of the complex logarithm, and the second equality follows if we restrict the domain of $g(\rvs)$ to a suitably small open subset of $B' \subset B$ so that $-\pi < -i(\vomega_j + \vomega_k)^T \rvs < \pi$. By Lemma \ref{lemma:1}, every eigenvalue $e^{-i\vomega_j^T\rvs}$ of $\rho(e^{i\rvs})$ has a conjugate eigenvalue. Since $e^{-i\vomega_j^T\rvs}$ has non-zero imaginary part if $\rvs \in B'$, for every $j$ there must be a $k \neq j$ such that  $e^{-i\vomega_k^T\rvs} = e^{i\vomega_j^T\rvs}$ since $e^{-i\vomega_j^T\rvs}$ cannot be conjugate to itself.  Therefore, for all $\rvs \in B'$ at least one of the log factors is 0, so $g(\rvs) = 0$ for all $\rvs \in B'$. Lemma \ref{lemma:2} implies that the polynomial $p_g(\rvs) \equiv \Pi_{j <k}[(\vomega_j + \vomega_k)^T \rvs]$ is the zero polynomial and hence $\vomega_{j^*} = -\vomega_{k^*}$ for some $j^* \neq k^*$. Since conjugate eigenvalues have the same multiplicity by Lemma \ref{lemma:1}, we may remove the term $\log(\frac{e^{-i\vomega_{j^*}^T\rvs}}{e^{i\vomega_{k^*}^T\rvs}})$ from $g(\rvs)$ without changing the fact that $g(\rvs) = 0$. Then, we repeat the above procedure until all $\vomega$'s have been paired. Relaxing the assumption that $\vomega_j \neq 0$ for all $j$, we may apply the above argument by restricting our attention to only the set of non-zero $\vomega$'s. We conclude that for every $\vomega_j \neq 0$ there is a $k$ such that $\vomega_j = -\vomega_k$\\

Therefore, the non-zero $\vomega$'s come in pairs $(\vomega, -\vomega)$. If there are $J$ such pairs, WLOG we assume $\vomega_{2j} = -\vomega_{2j-1}$ for $1 \leq j \leq J$ and $\vomega_k = 0$ for $k > 2J$. Since the columns of $\mV$ in $\rho(e^{i\rvs}) = \mV e^{\mSigma(\rvs)} \mV^H$ are the eigenvectors of $\rho(e^{i\rvs})$ and the eigenvalues $e^{i\vomega_j}$ come in conjugate pairs, Lemma \ref{lemma:1} implies that WLOG we can assume the first $2J$ columns of $\mV$ come in conjugate pairs. Moreover, we may also assume the last $D-2J$ columns also come in conjugate pairs since eigenvectors of real eigenvalues also come in conjugate pairs by Lemma \ref{lemma:1}. Hence, WLOG $V_{2j}  = \overline{V_{2j-1}}$ for all $j$. 

Next, we show that the representation takes the form $\mW\mR(\rvs)\mW^T$. Construct a block diagonal matrix $\mU$ such that the first $J$ blocks are $2 \times 2$ blocks of the form
\[\frac{1}{\sqrt{2}}\begin{bmatrix}
1 & i \\
1 & -i
\end{bmatrix}
\]
and the last block is just a $(D-2J) \times (D-2J)$ identity matrix. Using the fact that $\mU$ is a unitary matrix, we can expand $\rho(e^{i\rvs})$ to get
\[\rho(e^{i\rvs}) = \mV e^{\mSigma(\rvs)}\mV^H = \mV (\mU\mU^H) e^{\mSigma(\rvs)}(\mU\mU^H)\mV^H = (\mV \mU) e^{\mU^H\mSigma(\rvs)\mU}(\mV\mU)^H
\]
For the first $J$ blocks, we restrict our attention to a single $2 \times 2$ block of $\mU^H\mSigma(\rvs)\mU$. Using $\vomega_{2j} = -\vomega_{2j-1}$,  we simplify:
\[\Big(\frac{1}{\sqrt{2}}\begin{bmatrix}
1 & 1 \\
-i & i
\end{bmatrix}\Big)
\begin{bmatrix}
i \vomega_{2j}^T\rvs & 0 \\
0 & -i\vomega_{2j}^T\rvs
\end{bmatrix}
\Big(\frac{1}{\sqrt{2}}\begin{bmatrix}
1 & i \\
1 & -i
\end{bmatrix}\Big) = 
\begin{bmatrix}
0 & \vomega_{2j}^T\rvs \\
-\vomega_{2j}^T\rvs & 0
\end{bmatrix}
\]
The exponential of this block is the $2\times 2$ rotation matrix:
\[\exp({\begin{bmatrix}
0 & \vomega_{2j}^T\rvs \\
-\vomega_{2j}^T\rvs & 0
\end{bmatrix}})= \begin{bmatrix}
\cos(\vomega_{2j}^T\rvs) & -\sin(\vomega_{2j}^T\rvs) \\
\sin(\vomega_{2j}^T\rvs) & \cos(\vomega_{2j}^T\rvs)
\end{bmatrix}
\]
and hence the first $J$ $2 \times 2$ blocks of $e^{\mU^H\mSigma(\rvs)\mU}$ are just $2 \times 2$  rotation matrices. On the other hand, the $(D-2J) \times (D-2J)$ block of $\mU^H \Sigma(\rvs) \mU$ is a zero matrix, so $e^{\mU^H \Sigma(\rvs) \mU}$ is the $(D-2J) \times (D-2J)$ identity matrix. Thus, $e^{\mU^H\mSigma(\rvs)\mU} = \mR(\rvs)$.

Next we show that $\mV \mU$ is a real orthogonal matrix and hence if we let $\mW = \mV \mU$ then $\rho(e^{i\rvs}) = \mW \mR(\rvs) \mW^T$ as desired. Since $\mU^H$ is block-diagonal, when computing the matrix product $\mU^H\mV^H$ we can individually consider each block of $\mU^H$ multiplying its corresponding rows in $\mV^H$. Recall that the columns of $\mV$ come in conjugate pairs. Hence, restricting our attention to one pair of rows in $(\mV\mU)^H = \mU^H \mV^H$, we get
\[\Big(\frac{1}{\sqrt{2}}\begin{bmatrix}
1 & 1 \\
-i & i
\end{bmatrix}\Big)
\begin{bmatrix}
\vv^H \\ \overline{\vv}^H
\end{bmatrix} = \Big(\frac{1}{\sqrt{2}}\begin{bmatrix}
2\mathrm{Re}(\vv^H) \\
2\mathrm{Im}(\vv^H)
\end{bmatrix}\Big)
\]
so all columns of $\mV\mU$ are real. Because $\mV\mU$ is a product of unitary matrices, it is also unitary, and $\mV\mU$ must be real orthogonal Thus we have $\rho(e^{i\rvs}) = \mW \mR(\rvs) \mW^T$.

\end{proof}

\begin{remark}
The converse to the theorem is also true, namely that if $\rho$ is a map from $\mathbb{T}^n$ to $GL(D,\R)$ such that $\rho(e^{i\rvs}) = \mW\mR(\rvs)\mW^T$, then it is a $D$-dimensional real orthogonal representation of $\mathbb{T}^n$. This follows from the fact that $\mW \mR(\rvs)\mW^T$ is orthogonal and $\mR(\rvs_1)\mR(\rvs_2) = \mR(\rvs_1 + \rvs_2)$. Finally, we note that all orthogonal representations of $\mathbb{T}^n$ are actually \textit{special} orthogonal representations of $\mathbb{T}^n$. This is also easy to see, since we know $\det(\rho(1)) = \det(\mathbbm{1}) = 1$, so that if there exists some $e^{i\rvs} \in \mathbb{T}^n$ such that $\det(\rho(e^{i\rvs})) = -1$, then due to continuity of $\det$ and $\rho$ ($\rho$ is smooth by definition of representations of Lie groups), there must exist some $e^{i\rvs'}$ such that $\det(\rho(e^{i\rvs'})) = 0$, which is impossible.
\end{remark}

\begin{lemma}
\label{lemma:1} If $A$ is a real matrix and $\lambda$ is a complex eigenvalue of $A$ with eigenvector $v$, then $\overline{\lambda}$ is also an eigenvalue with eigenvector $\overline{v}$. If $\lambda$ is an eigenvalue with multiplicity $n$, then $\overline{\lambda}$ is also an eigenvalue with multiplicity $n$.
\end{lemma}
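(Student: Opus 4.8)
The plan is to prove the two assertions in turn, both resting on the single observation that entrywise complex conjugation commutes with multiplication by the real matrix $A$. For the first assertion, I would start from the eigenvalue equation $Av = \lambda v$ with $v \neq 0$ and conjugate both sides. Since conjugation distributes over sums and products and $\overline{A} = A$ because $A$ is real, this immediately yields $A\overline{v} = \overline{\lambda}\,\overline{v}$. As $v \neq 0$ forces $\overline{v} \neq 0$, the vector $\overline{v}$ is a genuine eigenvector, so $\overline{\lambda}$ is an eigenvalue of $A$. This exhausts the content of the first sentence and needs no further work.

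For the multiplicity statement I would first fix which notion of multiplicity is meant, since both the algebraic and the geometric version can be handled cleanly. For algebraic multiplicity, the key fact is that the characteristic polynomial $p(x) = \det(xI - A)$ has real coefficients because $A$ is real; a standard property of real polynomials is that their complex roots occur in conjugate pairs with identical multiplicities, seen by factoring $p$ over $\mathbb{C}$ and pairing each factor $(x - \lambda)$ with $(x - \overline{\lambda})$ into a real quadratic. Hence the algebraic multiplicity of $\overline{\lambda}$ equals that of $\lambda$. Alternatively, for geometric multiplicity, the conjugation map $v \mapsto \overline{v}$ restricts, by the computation in the first part, to a bijection between the eigenspaces $\ker(A - \lambda I)$ and $\ker(A - \overline{\lambda} I)$; checking that it carries a basis of one to a basis of the other (it is an additive, involutive, conjugate-linear bijection of $\mathbb{C}^D$) shows the two eigenspaces have equal dimension.

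There is essentially no hard step here, as the lemma is elementary. The only point requiring a moment of care is the multiplicity claim: one must either invoke the real-coefficient-polynomial fact correctly for algebraic multiplicity, or verify that the conjugate-linear map $v \mapsto \overline{v}$ still preserves dimension when passing between the two eigenspaces for geometric multiplicity. In the intended application $A$ is a unitary representation matrix and hence diagonalizable, so the two notions of multiplicity coincide and either argument suffices.
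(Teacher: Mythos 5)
Your proposal is correct and follows essentially the same route as the paper: the first assertion is proved by conjugating the eigenvalue equation and using $\overline{A}=A$, and the multiplicity claim is handled via the real characteristic polynomial, whose complex roots pair off with equal multiplicities. Your additional remark distinguishing algebraic from geometric multiplicity (and the observation that they coincide here since the representation matrices are diagonalizable) is a point of care the paper leaves implicit, but it does not change the argument.
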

\begin{proof}
Since the eigenvalues of $A$ are the roots of its characteristic polynomial $char(A) = \det(A - \lambda I)$, if $A$ is real then $char(A)$ is a real polynomial. Factorizing over $\mathbb{C}$:
\[char(A)(z) = c \ \Pi (z-r_i)
\]
Suppose $r_1$ is a complex root with multiplicity $n > 1$. As $p(\bar{z}) = \overline{p(z)}$ for a real polynomial, $0 = char(A)(r_1) = char(A)(\bar{r_1})$, so $\overline{r_1}$ is also a root and hence an eigenvalue. Removing the factors $(z-r_1)(z-\overline{r_1})$, we repeat the same argument $n$ times to conclude that $\overline{r_1}$ is also an eigenvalue with multiplicity $n$.  \\
\\
If $\lambda$ is a complex eigenvalue of $A$ with eigenvector $v$, then
$A v = \lambda v$. Taking the conjugate of both sides shows:
\[\overline{Av} = A \overline{v} = \overline{\lambda} \overline{v} = \overline{\lambda v} 
\]
\end{proof}
\begin{lemma}
\label{lemma:2} For any $p(\boldsymbol{x}) = p(x_1,\cdots,x_n) \in \mathbb{R}[X_1,\cdots,X_n]$, if $p(\boldsymbol{s}) = 0$ for all $\boldsymbol{s}$ in an open set $S$ then $p(\boldsymbol{x})$ is the zero polynomial.
\end{lemma}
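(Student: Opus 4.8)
The plan is to proceed by induction on the number of variables $n$. The base case $n=1$ is the familiar fact that a nonzero polynomial in one variable has at most $\deg p$ roots; since any nonempty open subset of $\R$ is infinite, a univariate polynomial vanishing on an open set would have infinitely many roots and must therefore be identically zero.

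For the inductive step, I would first shrink the hypothesis set: because $S$ is open and nonempty, it contains an open box $U_1 \times \cdots \times U_n$ with each $U_i \subset \R$ a nonempty open interval, and it clearly suffices to prove that $p$ vanishes identically given only that it vanishes on this box. The idea is then to write $p$ as a polynomial in the last variable with coefficients in $\R[X_1,\ldots,X_{n-1}]$, say $p(\vx) = \sum_{k=0}^{d} c_k(x_1,\ldots,x_{n-1})\,x_n^k$, freeze the first $n-1$ coordinates, and apply the base case in the variable $x_n$.

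Concretely, fix an arbitrary point $(a_1,\ldots,a_{n-1}) \in U_1 \times \cdots \times U_{n-1}$. Then $x_n \mapsto p(a_1,\ldots,a_{n-1},x_n)$ is a univariate polynomial vanishing on the open interval $U_n$, so by the base case each of its coefficients is zero, i.e.\ $c_k(a_1,\ldots,a_{n-1}) = 0$ for every $k$. Since the point was arbitrary, each $c_k$ vanishes on the open set $U_1 \times \cdots \times U_{n-1} \subset \R^{n-1}$, and the inductive hypothesis forces every $c_k$ to be the zero polynomial. Hence $p$ is the zero polynomial, completing the induction.

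The only real obstacle here is bookkeeping rather than depth: one must be careful that the $c_k$ are genuine polynomials in the remaining variables, so that the inductive hypothesis legitimately applies, and that the univariate vanishing argument is invoked at \emph{every} choice of the frozen coordinates, not merely at one. Everything else reduces to the single-variable statement, which is immediate from the finiteness of the root set of a nonzero polynomial.
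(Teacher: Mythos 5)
Your proof is correct, but it takes a different route from the paper's. The paper translates a point of $S$ to the origin, observes that a smooth function vanishing on an open set has all partial derivatives vanishing there, and then reads off the coefficients of the polynomial from its partial derivatives evaluated at $\boldsymbol{0}$ — essentially a Taylor-coefficient argument. You instead induct on the number of variables: writing $p$ as a polynomial in $x_n$ with coefficients in $\mathbb{R}[X_1,\ldots,X_{n-1}]$, freezing the first $n-1$ coordinates at an arbitrary point of an open sub-box, invoking the univariate fact that a nonzero polynomial has only finitely many roots, and then applying the inductive hypothesis to each coefficient polynomial. Your argument is entirely algebraic and avoids any appeal to calculus, at the cost of the (light) bookkeeping you acknowledge — in particular the correct observation that the univariate step must be run at \emph{every} frozen point so that each $c_k$ vanishes on an open set of $\mathbb{R}^{n-1}$. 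The paper's version is shorter and avoids induction, but leans on differentiation; both are standard and fully rigorous.
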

\begin{proof}
Let $ \boldsymbol{t} \in S$. Then, $p'(\boldsymbol{x}) = p(\boldsymbol{x} -  \boldsymbol{t}) = 0$ on some open $S'$ set containing $\boldsymbol{0}$. If two smooth functions coincide over some open set $U$, then their partial derivatives coincide on $U$. Hence, every partial derivative of $p'(\boldsymbol{x}) $ is 0. As evaluating the partial derivatives of $p'(\boldsymbol{x}) $ at $\boldsymbol{0}$ will return its coefficients, every coefficient is 0 and $p'(\boldsymbol{x}) $ is the zero polynomial. As $p(\boldsymbol{x}) = p'(\boldsymbol{x}+t) $, $p(\boldsymbol{x})$ is also the zero polynomial.
\end{proof}

\section{Explicit formulae for $\ln P(I|\alpha)$ and $\ln P(s | I, \alpha)$}
\label{section:Appendix-B}
Although the main results presented in this paper assume a uniform prior on $\rvs$, a more general prior on $\rvs$ can be used, and we will derive the formula using this more general prior. This prior is actually the conjugate prior for our likelihood function, which is desirable as it gives a simple functional form for $\ln P(\rvI|\rvalpha)$. Using this more general prior is likely to be useful in problems where the true prior distribution of the transformation variables is known and can be well-approximated by this prior. Specifically, the general prior on $\rvs$ takes the form:
\[P(\rvs) = \frac{1}{Z(\veta)}\exp\Big(\sum_{l=1}^L \kappa_l \cos(\vomega_l^T\rvs - \mu_l)\Big) = \frac{1}{Z(\veta)}\exp(\veta^T \vT(\rvs))\]
which is a distribution from the exponential family with natural parameter: 
\begin{align*}
\veta &= [\kappa_1\cos(\mu_1), \kappa_1\sin(\mu_1), \cdots, \kappa_L\cos(\mu_L), \kappa_L\sin(\mu_L)]^T \\
&\equiv [\veta_{11},\veta_{12}, \cdots, \veta_{L1}, \veta_{L2}]^T
\end{align*}
sufficient statistics: 
\[\vT(\rvs) = [\cos(\vomega_1^T\rvs), \sin(\vomega_1^T\rvs), \cdots, \cos(\vomega_L^T\rvs), \sin(\vomega_L^T\rvs)]^T,\]
and normalization constant: 
\[Z(\veta) = \int_\rvs \exp(\veta^T \vT(\rvs)) = \int_0^{2\pi}\cdots\int_0^{2\pi} \exp(\veta^T \vT(\rvs)) d\rvs_1 \cdots d\rvs_n.\]
Note that one can recover the uniform prior on $\rvs$ by simply taking $\kappa_l = 0$ for all $l$. We also note that this is only a slightly more generalized version of the prior discovered by \citet{Cohen_2014}, and that the following derivation of $\ln P(\rvI | \rvalpha)$ is also entirely due to \citet{Cohen_2014}, with only slight modifications to adapt to our new model.

According to our model $\rvI = \mW \mR(\rvs) \mW^T \mPhi \rvalpha + \vepsilon$, where $\vepsilon \sim \mathcal{N}(0,\sigma^2\mathbbm{1})$, we have:
\[P(\rvI|\rvs,\rvalpha) = \frac{1}{(2\pi\sigma^2)^{D/2}}\exp\Big(-\frac{||\rvI - \mW \mR(\rvs) \mW^T \mPhi \rvalpha||_2^2}{2\sigma^2}\Big)\]

Now, for convenience, define $\bm{u} = \mW \mPhi \rvalpha$ and $\vv = \mW \rvI$. Moreover, define
\[\hat{\veta} = [\hat{\eta}_{11}, \hat{\eta}_{12}, \hat{\eta}_{21}, \hat{\eta}_{22},\cdots, \hat{\eta}_{L1},\hat{\eta}_{L2}]
\]
such that
\[
\begin{bmatrix}\hat{\eta}_{l1} \\ \hat{\eta}_{l2}\end{bmatrix} 
= \begin{bmatrix}\eta_{l1} \\ \eta_{l2}\end{bmatrix}
+ \frac{1}{\sigma^2}
\begin{bmatrix}
u_{l1}v_{l1}+u_{l2}v_{l2} \\ 
u_{l1}v_{l2}-u_{l2}v_{l1}
\end{bmatrix}
\]
where
\begin{align*}
\veta &= [\eta_{11}, \eta_{12}, \eta_{21}, \eta_{22},\cdots, \eta_{L1},\eta_{L2}] \\
\bm{u} &= [u_{11}, u_{12}, u_{21}, u_{22}, \cdots, u_{L1}, u_{L2}]
\\
\vv &= [v_{11}, v_{12}, v_{21}, v_{22}, \cdots, v_{L1}, v_{L2}]
\end{align*}

Then:
\begin{align*}
\ln P(\rvI | \rvalpha)
&= \ln \int_\rvs P(\rvI | \rvs, \rvalpha) P(\rvs) \\
&= \ln \int_\rvs \frac{1}{(2\pi\sigma^2)^{D/2}}\exp\Big(-\frac{||\rvI - \mW \mR(\rvs) \mW^T \mPhi \rvalpha||_2^2}{2\sigma^2}\Big)\frac{1}{Z(\veta)}\exp(\veta^T\vT(\rvs)) \\
&= \ln \int_\rvs \frac{1}{(2\pi \sigma^2)^{D/2}}\exp(-\frac{1}{2\sigma^2}(||\mW^T\mPhi\rvalpha||_2^2 + ||\rvI||_2^2) + \frac{1}{\sigma^2}\vv^T\mR(\rvs)\bm{u})\frac{1}{Z(\veta)}  \exp(\veta^T\vT(\rvs)) \\
&= \ln \Big(\frac{\exp(-\frac{1}{2\sigma^2}(||\mW^T\mPhi\rvalpha||_2^2 + ||\rvI||_2^2))}{(2\pi \sigma^2)^{D/2}}\frac{1}{Z(\veta)} \int_\rvs \exp(\veta^T\vT(\rvs)+\frac{1}{\sigma^2}\vv^T\mR(\rvs)\bm{u}) \Big) \\
&= -\frac{1}{2\sigma^2}(||\mW^T\mPhi\rvalpha||_2^2 + ||\rvI||_2^2) - \frac{D}{2}\ln(2\pi\sigma^2) - \ln Z(\veta) + \ln \Big(\int_\rvs \exp(\hat{\veta}^T\vT(\rvs)) \Big) \\
&= -\frac{1}{2\sigma^2}(||\mW^T\mPhi\rvalpha||_2^2 + ||\rvI||_2^2) - \frac{D}{2}\ln(2\pi\sigma^2) + \ln Z(\hat{\veta}) - \ln Z(\veta)
\end{align*}
where the integrands in step 4 and 5 are equal because

\begin{align*}
\veta^T\vT(\rvs)+\frac{1}{\sigma^2}\vv^T\mR(\rvs)\bm{u} 
&= \sum_{l=1}^L 
\begin{bmatrix}
\eta_{l1} \\ \eta_{l2}
\end{bmatrix}^T
\begin{bmatrix}
\cos(\vomega_l^T\rvs) \\ \sin(\vomega_l^T\rvs)
\end{bmatrix}
+ 
\frac{1}{\sigma^2}
\begin{bmatrix}
v_{l1} \\ v_{l2}
\end{bmatrix}^T
\begin{bmatrix}
\cos(\vomega_l^T\rvs)&-\sin(\vomega_l^T\rvs)\\
\sin(\vomega_l^T\rvs)&\cos(\vomega_l^T\rvs)
\end{bmatrix}
\begin{bmatrix}
u_{l1} \\ u_{l2}
\end{bmatrix} \\
&= \sum_{l=1}^L 
\Big(\begin{bmatrix}\eta_{l1} \\ \eta_{l2}\end{bmatrix}
+ \frac{1}{\sigma^2}
\begin{bmatrix}
u_{l1}v_{l1}+u_{l2}v_{l2} \\ 
u_{l1}v_{l2}-u_{l2}v_{l1}
\end{bmatrix}\Big)^T 
\begin{bmatrix}
\cos(\vomega_l^T\rvs) \\ \sin(\vomega_l^T\rvs)
\end{bmatrix} \\
&= \hat{\veta}^T\mT(\rvs)
\end{align*}

Note that the parameter $\hat{\veta}$ determines the posterior distribution of $\rvs$, which is given by
\[P(\rvs | \rvI, \rvalpha) = \frac{1}{Z(\hat{\veta})}\exp(\hat{\veta}^T\vT(\rvs)),\]
since $P(\rvs | \rvI, \rvalpha) \propto P(\rvI | \rvs, \rvalpha)P(\rvs) \propto \exp(\hat{\veta}^T\vT(\rvs))$ (the last step can be seen from the derivation of $\ln P(\rvI|\rvalpha)$ where $P(\rvI | \rvs, \rvalpha) P(\rvs)$ is the integrand). In order to compute $P(\rvs | \rvI, \rvalpha)$, we just compute $\hat{\veta}$ and then apply the formula.

\section{Gradients of the model}
\label{section:Appendix-C}
In this section we provide details of the derivations for Eq. \ref{eq:gradient_theta}, Eq. \ref{eq:gradient_theta_exact}, and \ref{eq:gradient_alpha}, as well as concrete expressions for the gradients. Notice that the derivations for Eq. \ref{eq:gradient_theta}, Eq. \ref{eq:gradient_theta_exact}, and \ref{eq:gradient_alpha} all employ the same "trick."

Eq. \ref{eq:gradient_theta}:
\begin{align*}
\nabla_\vtheta \ln P_\vtheta(\rvI)
&= \frac{1}{P_\vtheta(\rvI)} \nabla_\vtheta P_\vtheta(\rvI) \\
&= \frac{1}{P_\vtheta(\rvI)} \nabla_\vtheta \int_{\rvs, \rvalpha} P_\vtheta(\rvI, \rvs, \rvalpha) \\
&= \int_{\rvs, \rvalpha}\frac{P_\vtheta(\rvI, \rvs, \rvalpha)}{P_\vtheta(\rvI)} \nabla_\vtheta \ln P_\vtheta(\rvI, \rvs, \rvalpha) \\
&= \int_{\rvs, \rvalpha}P_\vtheta(\rvs, \rvalpha|\rvI) \nabla_\vtheta (\ln P_\vtheta(\rvI| \rvs, \rvalpha) + \ln P(\rvs,\rvalpha))\\
&= \int_\rvalpha \Big(P_\vtheta(\rvalpha|\rvI) \int_\rvs P_\vtheta(\rvs|\rvI, \rvalpha) \nabla_\vtheta \ln P_\vtheta(\rvI| \rvs, \rvalpha)\Big)\\
&\approx \int_\rvalpha \Big(\delta(\rvalpha - \hat{\rvalpha}) \int_\rvs P_\vtheta(\rvs|\rvI, \rvalpha) \nabla_\vtheta \ln P_\vtheta(\rvI| \rvs, \rvalpha)\Big)\\
&= \int_\rvs P_\vtheta(\rvs|\rvI, \hat{\rvalpha}) \nabla_\vtheta \ln P_\vtheta(\rvI| \rvs, \hat{\rvalpha})\\
&= \E_{\rvs \sim P_\vtheta(\rvs|\rvI,\hat{\rvalpha})}[\nabla_\vtheta \ln P_\vtheta(\rvI|\rvs,\hat{\rvalpha})]
\end{align*}

Eq. \ref{eq:gradient_theta_exact}:
\begin{align*}
\nabla_\vtheta \ln P_\vtheta(\rvI | \hat{\rvalpha})
&= \frac{1}{P_\vtheta(\rvI | \hat{\rvalpha})} \nabla_\vtheta P_\vtheta(\rvI | \hat{\rvalpha}) \\
&= \frac{1}{P_\vtheta(\rvI | \hat{\rvalpha})} \nabla_\vtheta \int_\rvs P_\vtheta(\rvI, \rvs| \hat{\rvalpha}) \\
&= \int_\rvs \frac{P_\vtheta(\rvI, \rvs| \hat{\rvalpha})}{P_\vtheta(\rvI | \hat{\rvalpha})} \nabla_\vtheta \ln P_\vtheta(\rvI, \rvs| \hat{\rvalpha}) \\
&= \int_\rvs P_\vtheta(\rvs| \rvI, \hat{\rvalpha}) \nabla_\vtheta (\ln P_\vtheta(\rvI| \rvs, \hat{\rvalpha}) + \ln P(\rvs| \hat{\rvalpha})) \\
&= \int_\rvs P_\vtheta(\rvs| \rvI, \hat{\rvalpha}) \nabla_\vtheta \ln P_\vtheta(\rvI| \rvs, \hat{\rvalpha})\\
&= \E_{\rvs \sim P_\vtheta(\rvs|\rvI,\hat{\rvalpha})}[\nabla_\vtheta \ln P_\vtheta(\rvI|\rvs,\hat{\rvalpha})]
\end{align*}

Eq. \ref{eq:gradient_alpha}:
\begin{align*}
\nabla_\rvalpha \ln P_\vtheta(\rvalpha | \rvI)
&= \nabla_\rvalpha \ln P_\vtheta(\rvI | \rvalpha) + \nabla_\rvalpha \ln P(\rvalpha) \\
&= \E_{\rvs \sim P_\vtheta(\rvs|\rvI,\rvalpha)}[\nabla_\rvalpha \ln P_\vtheta(\rvI|\rvs,\rvalpha)] + \nabla_\rvalpha \ln P(\rvalpha)
\end{align*}
where in the last step we applied the Eq. \ref{eq:gradient_theta_exact} with $\hat{\rvalpha}$ replaced by $\rvalpha$.

Next we provide concrete expressions for the various gradients:
\begin{align*}
\nabla_{\rvalpha} \ln P_\vtheta(\rvalpha|\rvI) 
&= \E_{\rvs \sim P_\vtheta(\rvs|\rvI,\rvalpha)}[\nabla_{\rvalpha} \ln P_\vtheta(\rvI|\rvs,\rvalpha)] + \nabla_{\rvalpha} \ln P(\rvalpha) \\
&= \E_{\rvs \sim P_\vtheta(\rvs|\rvI,\rvalpha)}[\frac{1}{\sigma^2} \mPhi^T\mT(\rvs)^T\vepsilon] - \lambda \mathrm{sign}(\rvalpha) \\
&= \frac{1}{\sigma^2} \mPhi^T\mW(\bar{\mR}^T\mW^T\rvI - \mW^T\mPhi\rvalpha) - \lambda \mathrm{sign}(\rvalpha)
\end{align*}
where $\vepsilon = \rvI - \mT(\rvs)\mPhi\rvalpha$ and $\bar{\mR} = \E_{\rvs \sim P_\vtheta(\rvs|\rvI,\rvalpha)}[\mR(\rvs)]$.

\begin{align*}
\nabla_\mPhi \ln P_\vtheta(\rvI) 
&\approx \E_{\rvs \sim P_\vtheta(\rvs|\rvI,\hat{\rvalpha})} [\nabla_\mPhi \ln P_\vtheta(\rvI|\rvs, \hat{\rvalpha})] \\
&= \E_{\rvs \sim P_\vtheta(\rvs|\rvI,\hat{\rvalpha})} [\frac{1}{\sigma^2}\mT(\rvs)^T\hat{\vepsilon}\hat{\rvalpha}^T] \\
&= \frac{1}{\sigma^2}\mW(\bar{\mR}^T\mW^T\rvI - \mW^T\mPhi\hat{\rvalpha})\hat{\rvalpha}^T
\end{align*}
where $\hat{\vepsilon} = \rvI - \mT(\rvs)\mPhi\hat{\rvalpha}$.

\begin{align*}
\nabla_\mW \ln P_\vtheta(\rvI) 
&\approx \E_{\rvs \sim P_\vtheta(\rvs|\rvI,\hat{\rvalpha})} [\nabla_\mW \ln P_\vtheta(\rvI|\rvs, \hat{\rvalpha})] \\
&= \E_{\rvs \sim P_\vtheta(\rvs|\rvI,\hat{\rvalpha})} [\frac{1}{\sigma^2}(\hat{\vepsilon}(\mT(\rvs)\mPhi\hat{\rvalpha})^T+ \mPhi\hat{\rvalpha}(\mT(\rvs)^T\hat{\vepsilon})^T)\mW] \\
&= \frac{1}{\sigma^2}(\mPhi\hat{\rvalpha}(\mW^T\rvI)^T\bar{\mR} + \rvI(\mW^T\mPhi\hat{\rvalpha})^T\bar{\mR}^T - \mPhi\hat{\rvalpha}(\mW^T\mPhi\hat{\rvalpha})^T - \E_{\rvs \sim P_\vtheta(\rvs|\rvI,\hat{\rvalpha})}[\hat{\rvI}\hat{\rvI}^T]\mW) \\
&= \frac{1}{\sigma^2}(\mPhi\hat{\rvalpha}(\bar{\mR}^T\mW^T\rvI)^T + \rvI(\bar{\mR}\mW^T\mPhi\hat{\rvalpha})^T - \mPhi\hat{\rvalpha}(\mW^T\mPhi\hat{\rvalpha})^T \\
& \hspace{0.5in} - \mW\E_{\rvs \sim P_\vtheta(\rvs|\rvI,\hat{\rvalpha})}[\mR(\rvs)\mW^T\mPhi\hat{\rvalpha}\hat{\rvalpha}^T\mPhi^T\mW\mR(\rvs)^T])
\end{align*}
where $\hat{\rvI} = \mT(\rvs)\mPhi\hat{\rvalpha}$

As seen, the gradient for $\mW$ is quite difficult to compute. In our implementation, we used an approximation that results in a much simpler expression for the gradient for $\mW$ derived above, by assuming independence between the term $\mT(\rvs)$ and $\hat{\vepsilon}$, which are both dependent on the random variable $\rvs$. For consistency, we also applied the same approximation to both $\rvalpha$ and $\mPhi$ gradients. More explicitly, we use the following approximate gradients:
\begin{align*}
\nabla_{\rvalpha} \ln P_\vtheta(\rvalpha|\rvI)
&= \E_{\rvs \sim P_\vtheta(\rvs|\rvI,\rvalpha)}[\frac{1}{\sigma^2} \mPhi^T\mT(\rvs)^T\vepsilon] - \lambda \mathrm{sign}(\rvalpha) \\
&\approx \frac{1}{\sigma^2} \mPhi^T\bar{\mT}^T\hat{\bar{\vepsilon}} - \lambda \mathrm{sign}(\rvalpha) \\
\end{align*}
\begin{align*}
\nabla_\mPhi \ln P_\vtheta(\rvI) 
&\approx
\E_{\rvs \sim P_\vtheta(\rvs|\rvI,\hat{\rvalpha})} [\frac{1}{\sigma^2}\mT(\rvs)^T\hat{\vepsilon}\hat{\rvalpha}^T] \\
&\approx
\frac{1}{\sigma^2}\bar{\mT}^T\hat{\bar{\vepsilon}}\hat{\rvalpha}^T \\
\end{align*}
\begin{align*}
\nabla_\mW \ln P_\vtheta(\rvI) 
&\approx
\E_{\rvs \sim P_\vtheta(\rvs|\rvI,\hat{\rvalpha})} [\frac{1}{\sigma^2}(\hat{\vepsilon}(\mT(\rvs)\mPhi\hat{\rvalpha})^T+ \mPhi\hat{\rvalpha}(\mT(\rvs)^T\hat{\vepsilon})^T)\mW] \\
&\approx
\frac{1}{\sigma^2}(\hat{\bar{\vepsilon}}(\bar{\mT}\mPhi\hat{\rvalpha})^T+ \mPhi\hat{\rvalpha}(\bar{\mT}^T\hat{\bar{\vepsilon}})^T)\mW
\end{align*}
where $\bar{\mT} = \E_{\rvs \sim P_\vtheta(\rvs|\rvI,\hat{\rvalpha})}[\mT(\rvs)]$ and $\hat{\bar{\vepsilon}} = \E_{\rvs \sim P_\vtheta(\rvs|\rvI,\hat{\rvalpha})}[\hat{\epsilon}]$.

We found by chance that the approximate gradient works better than the exact gradient in practice. However, we currently do not have a theory for why the approximate gradient works better.


\section{Usage of FISTA in LSC}
\label{section:Appendix-D}
FISTA is a method for fast gradient descent when the objective function is a sum of a smooth convex function $f$ and a non-smooth convex function $g$ \citep{Beck_2009}. It is typically applied to problems such as the traditional sparse coding, where the objective is the sum of the smooth convex function $||\rvI - \mPhi \rvalpha||_2^2$ and non-smooth convex sparsity cost $||\rvalpha||_1$. As an extension of sparse coding, we would like to use FISTA in order to speed up convergence for $\rvalpha$ as well. The main problem is that our new objective is possibly a non-convex function of $\rvalpha$, and as a result the theoretical guarantees of FISTA may not apply. Fortunately, we find that despite the lack of theoretical guarantees, FISTA still works very well in LSC. 

In LSC, we directly applied FISTA with constant step size to perform the optimization problem $$\argmin_\rvalpha - \ln P_\vtheta(\rvalpha| \rvI) = \argmax_\rvalpha (- \ln P_\vtheta(\rvI| \rvalpha) - \ln P(\rvalpha)) \equiv \argmax_\rvalpha (f(\rvalpha) + g(\rvalpha))$$
where $g$ corresponds to the non-smooth convex function assumed in the FISTA paper. The only free parameter is the choice of step size, which, accoridng to FISTA, should be set as $1/L(f)$ if $f(\rvalpha)$ were convex, where $L(f)$ is a Lipschitz constant of $f$. In our case, we set the step size as $1.5||\frac{1}{\sigma^2}\mPhi^TWW^T\mPhi||$, where $||\cdot||$ is the spectral norm of the matrix. To understand this choice of step size, we begin by noting that 
\begin{align*}
\nabla_{\rvalpha} f(\rvalpha) 
&= -\nabla_{\rvalpha} \ln P_\vtheta(\rvI|\rvalpha) \\
&= \E_{\rvs \sim P_\vtheta(\rvs|\rvI,\rvalpha)}[-\nabla_{\rvalpha} \ln P_\vtheta(\rvI|\rvs,\rvalpha)] \\
&= \int_\rvs P_\vtheta(\rvs | \rvI, \rvalpha) \vh(\rvs, \rvalpha)
\end{align*}
where $\vh(\rvs, \rvalpha) \equiv -\nabla_{\rvalpha} \ln P_\vtheta(\rvI|\rvs,\rvalpha) = - \frac{1}{\sigma^2} \mPhi^T\mT(\rvs)^T(\rvI - \mT(\rvs)\mPhi\rvalpha)$ is the score.
Hence
\begin{align*}
\nabla_{\rvalpha}^2 f(\rvalpha)
&= \nabla_\rvalpha \int_\rvs P_\vtheta(\rvs | \rvI, \rvalpha) \vh(\rvs, \rvalpha) \\
&= \int_\rvs \vh(\rvs, \rvalpha) \nabla_\rvalpha P_\vtheta(\rvs | \rvI, \rvalpha)^T + P_\vtheta(\rvs | \rvI, \rvalpha) \nabla_\rvalpha \vh(\rvs, \rvalpha) \\
&= \int_\rvs P_\vtheta(\rvs|\rvI, \rvalpha) [\vh(\rvs, \rvalpha) \nabla_\rvalpha \ln P_\vtheta(\rvs | \rvI, \rvalpha)^T + \nabla_\rvalpha \vh(\rvs, \rvalpha)] \\
&= \int_\rvs P_\vtheta(\rvs|\rvI, \rvalpha) \vh(\rvs, \rvalpha) (\nabla_\rvalpha \ln P_\vtheta(\rvI| \rvs, \rvalpha) - \nabla_\rvalpha \ln P_\vtheta(\rvI | \rvalpha))^T \\
& \hspace{0.5in} + \int_s P_\vtheta(\rvs|\rvI,\rvalpha) \frac{1}{\sigma^2}\mPhi^TWW^T\mPhi \\
&= \frac{1}{\sigma^2}\mPhi^TWW^T\mPhi - \E_{s \sim P_\vtheta(\rvs|\rvI, \rvalpha)} [\vh(\rvs, \rvalpha) \vh(\rvs, \rvalpha)^T] + \nabla_\rvalpha \ln P_\vtheta(\rvI | \rvalpha) \nabla_\rvalpha \ln P_\vtheta(\rvI | \rvalpha)^T \\
&= \frac{1}{\sigma^2}\mPhi^TWW^T\mPhi - \E_{s \sim P_\vtheta(\rvs|\rvI, \rvalpha)} [\vh(\rvs, \rvalpha) \vh(\rvs, \rvalpha)^T] \\
& \hspace{0.5in} + \E_{s \sim P_\vtheta(\rvs|\rvI, \rvalpha)} [\vh(\rvs, \rvalpha)]\E_{s \sim P_\vtheta(\rvs|\rvI, \rvalpha)} [\vh(\rvs, \rvalpha)]^T \\
&= \frac{1}{\sigma^2}\mPhi^TWW^T\mPhi - \Cov(\vh(\rvs, \rvalpha))
\end{align*}
Since $\mPhi^TWW^T\mPhi$ is positive semidefinite and $- \Cov(\vh(\rvs, \rvalpha))$ is negative semidefinite, the sum is not guaranteed to be positive semidefinite. Hence $f(\rvalpha)$ is not necessarily convex. However, it does imply that $$||\nabla_\rvalpha^2 f(\rvalpha)|| \leq ||\frac{1}{\sigma^2}\mPhi^TWW^T\mPhi|| + ||- \Cov(\vh(\rvs, \rvalpha))|| \leq ||\frac{1}{\sigma^2}\mPhi^TWW^T\mPhi|| + || \Cov(\vh(\rvs, \rvalpha))||$$
If $f(\rvalpha)$ were convex, then this would mean that a Lipschitz constant for $f$ would be $||\frac{1}{\sigma^2}\mPhi^TWW^T\mPhi|| + M$, where $M$ is a bound for $|| \Cov(\vh(\rvs, \rvalpha))||$. $M$ is difficult to compute, but in practice we find that setting the step size as $1.5||\frac{1}{\sigma^2}\mPhi^TWW^T\mPhi||$ works well, which suggests $M$ is usually small in comparison to the first term. This is not surprising, especially since during the later stages of training the variance of $\rvs$ is usually very small, and hence the covariance of $\vf(\rvs, \rvalpha)$ is also expected to have a small spectral norm.

\section{Training details}
\label{section:Appendix-E}
For training on the 2D translation and the rotation + scaling dataset, the hyperparameters used for the model is detailed in Table \ref{params}.

\begin{table}[!htb]
\label{params}
\begin{center}
\begin{tabular}{ll}
\multicolumn{1}{c}{\bf Variable}  &\multicolumn{1}{c}{\bf Value}
\\ \hline \\
B (batch size) & 100 \\
K (number of dictionary templates)         & 10 \\
N (number of samples along each dimension of the integral $\bar{\mR} = \int_\rvs P_\vtheta(\rvs|\rvI,\rvalpha) \mR(\rvs)$)             & 50 \\
L (number of irreducible representations)             & 128 \\
$T$ (number of gradient update steps for $\rvalpha$) & 20 \\
$n$ (dimensionality of transformation parameter $\rvs$) & 2\\
$\sigma^2$ (variance of the Gaussian noise $\epsilon$ in the generative model) & 0.01 \\
$\lambda$ (sparse penalty) & 10 \\
$\eta_\mPhi$ (learning rate for $\mPhi$) & 0.05 \\
$\eta_\mW$ (learning rate for $\mW$) & 0.3 \\
$\rvalpha_0$ (initialization of $\rvalpha$) & $0.01$ \\
multiplicity of $\vomega$ & $1$ \\
parameters for geoopt Riemannian ADAM optimizer (excluding learning rate) & default \\
\end{tabular}
\end{center}
\caption{Hyperparameters of LSC when trained on 2D translation and rotation + scaling datasets}
\end{table}

When trained on MNIST, the only change is that the multiplicity of $\vomega$ is $2$ instead of $1$.

For computing the SNRs in table \ref{table:1}, we used $N = 100$ instead of $N = 50$ to obtain higher quality reconstruction. The SNRs for sparse coding are obtained using an improved version of the algorithm in \citet{Olshausen_1997}, where the main modification is the use of an approximate second-order gradient descent method for optimizing $\mPhi$. Specifically, instead of updating $\mPhi$ with the gradient step $\Delta \mPhi = \eta_\mPhi \nabla_\mPhi L$, we update it with the approximate second-order gradient descent step $\Delta \mPhi_k = \frac{1}{\overline{\alpha_k^2}+\epsilon}(\eta_\mPhi \nabla_\mPhi L)_k$, where $\mPhi_k$ is the $k$-th column of $\mPhi$, $\overline{\alpha_k^2}$ is the average of $\alpha_k^2$ over the last $300$ batches of $\rvalpha$, and $\epsilon = 0.001$ is a small constant added to prevent instability. This greatly speeds up convergence of the dictionary. To ensure fair comparison between LSC and sparse coding, the following hyperparameters for both models are set to be equal: B, K, $\sigma^2$, and  $\lambda$. Notice that when these hyperparameters are set equal, the loss function for LSC coincides with the loss function for sparse coding in the limiting case where all the $\vomega_l$ are $0$ (in which case the transformation $\mT(\rvs)$ will just be the identity matrix) and $\mW$ is full rank. The rest of the hyperparameters for sparse coding algorithm are manually optimized for best possible SNRs.

\end{document}